\theoremstyle{plain}
\newtheorem{theorem}{Theorem}[section]
\theoremstyle{definition}
\theoremstyle{remark}
\newtheorem*{remark*}{Remark}
\useunder{\uline}{\ul}{}
\lstdefinestyle{mystyle}{
    basicstyle=\small\ttfamily,
    breaklines=true, 
    columns=flexible 
}
\DeclareMathOperator*{\argmax}{arg\,max}
\newcommand{\ie}{\emph{i.e., }}
\newcommand{\eg}{\emph{e.g., }}
\newcommand{\cf}{\emph{cf. }}
\newcommand{\aka}{\emph{aka. }}
\newcommand{\methodname}{alignment potential}
\newcommand{\MethodName}{Alignment Potential}
\newlength{\origtextfloatsep}
\icmltitlerunning{Larger or Smaller Reward Margins to Select Preferences for Alignment?}
\begin{document}

\twocolumn[
\icmltitle{Larger or Smaller Reward Margins to Select Preferences for Alignment?}



\icmlsetsymbol{intern}{\textdagger}

\begin{icmlauthorlist}
\icmlauthor{Kexin Huang}{intern,ustc}
\icmlauthor{Junkang Wu}{intern,ustc}
\icmlauthor{Ziqian Chen}{ali}
\icmlauthor{Xue Wang}{ali}
\icmlauthor{Jinyang Gao}{ali}
\icmlauthor{Bolin Ding}{ali}
\icmlauthor{Jiancan Wu}{ustc}
\icmlauthor{Xiangnan He}{ustc}
\icmlauthor{Xiang Wang}{ustc}
\end{icmlauthorlist}

\icmlaffiliation{ustc}{University of Science and Technology of China, Hefei, China}
\icmlaffiliation{ali}{Alibaba Group, Hangzhou, China}

\icmlcorrespondingauthor{Xiang Wang}{xiangwang1223@gmail.com}

\icmlkeywords{LLM Alignment, RLHF, Data Selection}

\vskip 0.3in
]



\printAffiliationsAndNotice{\textsuperscript{\textdagger}Work done during internship at Alibaba}  

\begin{abstract}
Preference learning is critical for aligning large language models (LLMs) with human values, with the quality of preference datasets playing a crucial role in this process. 
While existing metrics primarily assess data quality based on either \textit{explicit} or \textit{implicit} reward margins, they often provide contradictory evaluations for the same data.
To address this issue, we propose a new metric of \textit{\methodname{}}, $M_{AP}$, 
which quantifies the gap from the model's \textit{current implicit} reward margin to the \textit{target explicit} reward margin, thereby estimating the model's potential to align on the preference data.
Empirical results demonstrate that training on the data selected by $M_{AP}$ consistently enhances alignment performance, surpassing existing metrics across different base models and optimization objectives.
Furthermore, our method can be extended to self-play data generation frameworks, where we use this metric to identify high-quality data within the self-generated content by LLMs. 
Under this data generation scenario, our method surpasses current state-of-the-art 
methods across various training settings and demonstrates continuous improvements
with increasing dataset size and training iterations.
\end{abstract}
\vspace{-2em}
\section{Introduction}\label{sec:intro}

Learning from human feedback is essential for aligning large language models (LLMs) \citep{gpt4report,llama2report} with human preference, ensuring they are helpful, honest, and harmless \citep{3H-askell2021general}. 
A standard method to achieve such alignment is reinforcement learning from human feedback (RLHF) \citep{rlhf-christiano2017deep,rlhf-ouyang2022training,rlhf-stiennon2020learning}, which involves iterative LLM fine-tuning and reward model training. 
To address the complexity inherent in this multi-stage training process, several offline methods --- such as DPO \citep{DPO} and SimPO \citep{SimPO} --- have been proposed. 
%
They directly align LLMs using a static offline preference dataset $\{(x,y_w,y_l)\}$, where $y_w$ and $y_l$ denote the preferred (winning) and less preferred (losing) 
responses to the input prompt $x$, respectively.
Nevertheless, due to their offline nature, these methods rely heavily on 
the quality of the preference dataset, which can substantially impact the alignment process \citep{iclr24statistical, icml24onpolicydata, icml24bridging, a_recipe}.


Recent research has focused on improving LLM alignment by enhancing the quality of preference datasets, employing strategies such as selecting high-quality data for training \citep{rs_dpo, active_pref_learn, filter_dpo} and re-weighting the loss based on data quality \citep{cringe, reward_diff_dpo, wu2024betadpo}.
These approaches primarily rely on two 
metrics to assess 
the quality of preference pair $(x,y_w,y_l)$ in the offline dataset:
the \textit{explicit reward margin} metric \citep{reward_diff_dpo}:
$$M_r(x,y_w,y_l) = |r(x,y_w) - r(x,y_l)|,$$
based on the explicit reward $r(x,\cdot)$ provided by a reward model;
and the \textit{implicit reward margin} metric \citep{a_recipe, active_pref_learn}: 
$$M_\pi(x,y_w,y_l) = |\hat r_\theta(x,y_w) - \hat r_\theta(x,y_l)|,$$ 
based on the implicit reward \citep{DPO} $\hat r_\theta(x,\cdot) = \beta\log\frac{\pi_\theta(\cdot|x)}{\pi_\mathrm{ref}(\cdot|x)}$ derived by the LLM policy $\pi_\theta(\cdot|x)$ and a reference policy $\pi_\mathrm{ref}(\cdot|x)$.
While it has been demonstrated that selecting data with \textit{larger explicit reward margins} \citep{reward_diff_dpo, eva} or \textit{smaller implicit reward margins} \citep{a_recipe} for training can improve alignment performance, these two metrics could provide conflicting guidance for the same data.
To illustrate this conflict, we present two examples from the preference dataset used in SimPO (Figure~\ref{fig:teaser_a}), where the same preference pair is deemed high quality by one metric but low quality by the other.
Such inconsistency in data quality evaluations naturally raises a critical question: 

\begin{figure*}[t]
    \centering
    \begin{subfigure}{0.7\textwidth}
        \centering
        \includegraphics[width=\linewidth]{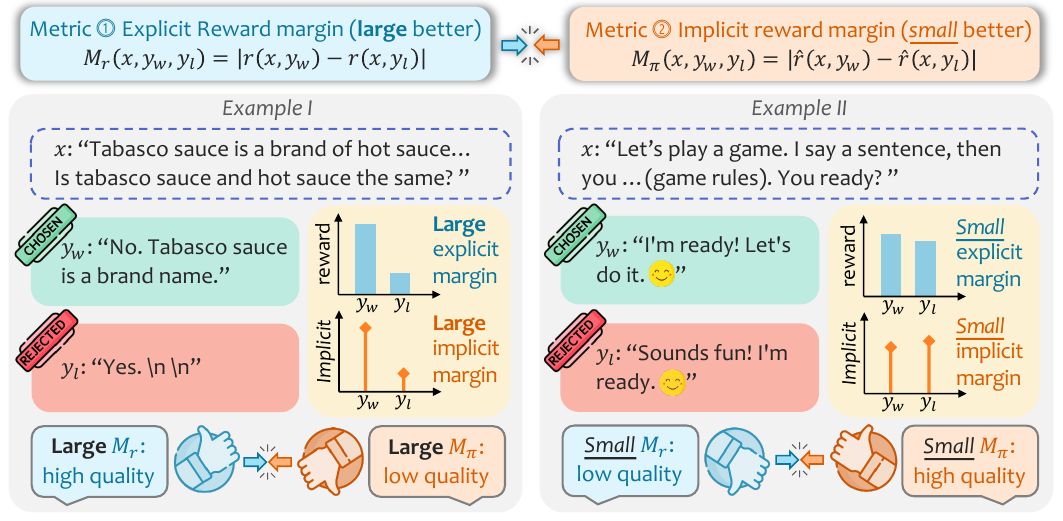}
        \caption{Two examples illustrating the conflict between explicit and implicit reward margins}
        \label{fig:teaser_a}
    \end{subfigure}%
    \begin{subfigure}{0.28\textwidth}
        \centering
        \includegraphics[width=\linewidth]{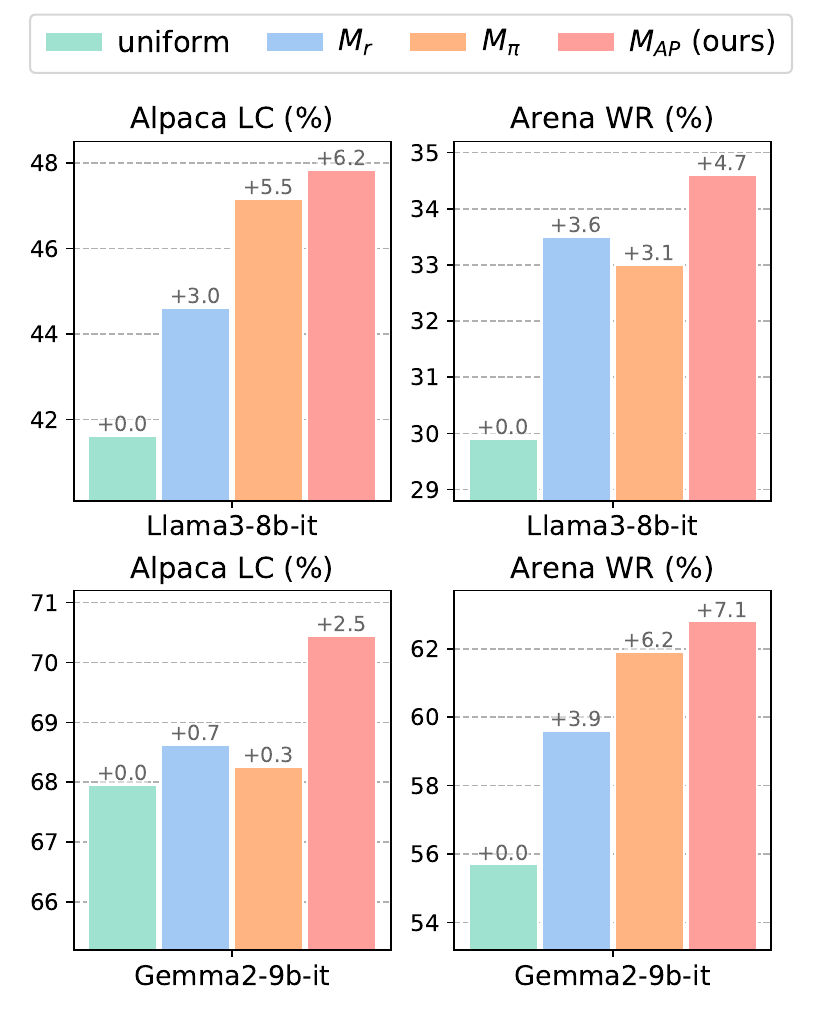}
        \caption{Data selection performance}
        \label{fig:teaser_b}
    \end{subfigure}
    \vspace{-6pt}
    \caption{(\ref{fig:teaser_a}) \textbf{Contradiction of existing metrics.} 
    The left example, with large explicit and implicit reward margins, is rated as ``high quality'' by explicit reward margin but ``low quality'' by implicit reward margin. The right example, where both margins are small, is rated as ``low quality'' by the explicit margin but ``high quality'' by the implicit margin. 
    In both cases, the implicit margins are already aligned with the explicit ones, making both data rated as ``low quality'' by our metric, \ie no need for further training.
    (\ref{fig:teaser_b}) \textbf{Enhanced performance by data selection.} Llama-3-8b-instruct and Gemma-2-9b-it's performance on Top-40\% data subset selected by different data metrics (uniform refers to uniformly sampling 40\% data from the dataset), with our proposed metric achieving the highest results.}
    \label{fig:teaser}
    \vspace{-8pt}
\end{figure*}

\vspace{-2pt}
{\it 
Is there a more universal and theoretically grounded metric for evaluating preference data quality in alignment?
}
\vspace{-2pt}

In response to this question, we revisit 
the alignment objective in RLHF, where the 
optimum is characterized by \citet{DPO}: $\hat r_\theta^*(x,y) = r(x,y) + c(x)$, with $c(x)$ being a partition term independent of $y$.
This suggests that the implicit reward margin at optimality should be equivalent to explicit reward margin: $|\hat r_{\theta}^*(x,y_w) - \hat r_{\theta}^*(x,y_l)| \equiv |r(x,y_w)-r(x,y_l)|.$ 
Building upon this insight, we propose the \textit{\methodname{}} metric, $M_{AP}$,
which quantifies the gap from the current model's implicit reward margin 
to the target explicit reward margin:
\begin{equation}\label{eqn:metric_def}
\begin{aligned}
&M_{AP}(x,y_w,y_l)\\
= &\underbrace{|r(x,y_w) - r(x,y_l)|}_\text{target explicit reward margin} 
- \underbrace{\left|\hat r_{\theta}(x,y_w) - \hat r_{\theta}(x,y_l)\right|}_\text{current implicit reward margin}.
\end{aligned}
\end{equation}
This metric estimates the model's ``potential'' to align on the preference data by measuring how much the model can improve its preference discrimination:
as shown in Equation \eqref{eqn:metric_def}, a preference pair $(x,y_w,y_l)$ with a \textit{higher \methodname{}} score indicates a significant reward improvement of the preferred response
$y_w$ over $y_l$ while the model cannot sufficiently differentiate them (\ie similar implicit rewards), reflecting the large potential for 
alignment enhancement. 
To validate this, we conduct a preliminary experiment by benchmarking models trained on the top-rated data subsets evaluated by various metrics within SimPO's preference dataset.
As depicted in Figure \ref{fig:teaser_b},  our proposed $M_{AP}$ metric 
outperforms other metrics, verifying its effectiveness in identifying high-quality data for alignment.
Furthermore, $M_{AP}$ not only supports selecting high-quality preference pairs in \textit{existing} datasets, but also generalizes well to scenarios involving \textit{additional} data, 
such as self-play alignment frameworks. 
In these frameworks, LLMs actively generate new training data through response sampling \citep{NLHF, SPIN} or prompt creation \citep{eva}.
Our metric seamlessly integrates into such self-play frameworks, facilitating the identification of high-quality data within the intricate self-generated content.
Consequently, it enables efficient expansion of high-quality preference data from a minimal seed dataset,
thereby circumventing the constraints of static datasets and enhancing both alignment efficacy and efficiency.

We evaluate our proposed $M_{AP}$ metric through extensive experiments within such a 
self-play framework.
Using $M_{AP}$, 
we first select top-rated subsets from the self-generated dataset for continued policy model training, achieving significant improvements over the current state-of-the-art 
results across various base models (\eg Llama \citep{llama3report}, Gemma \citep{gemma2}) and learning algorithms (\eg DPO \citep{DPO}, SimPO \citep{SimPO}).
Scaling up training by enlarging the generate-then-selected dataset further yields remarkable 
performance gains, even surpassing the model trained on twice the size of the UltraFeedback dataset.
Additional experiments demonstrate that increasing training iterations under our metric's guidance leads to consistent performance gains over the models trained on the default dataset.
These empirical results 
validate the practical utility of our proposed $M_{AP}$ 
metric across diverse training conditions, providing a robust 
solution for enhancing LLM alignment through optimized preference data selection and utilization.

\section{Preliminaries}
\textbf{RLHF with reward models.}
To align a supervised fine-tuned (SFT) model $\pi_\mathrm{SFT}$ with human preferences, the standard RLHF pipeline begins with the reward modeling phase \citep{rlhf-ziegler2019finetune, rlhf-stiennon2020learning, rlhf-ouyang2022training}.
A preference dataset $\mathcal D = \{(x,y_w,y_l)\}$ is constructed by sampling response pairs $(y_1,y_2)\sim\pi_\mathrm{SFT}(\cdot|x)$ from prompts $x$, and human annotators evaluate the preference $y_w\succ y_l|x$, where $y_w$ and $y_l$ are the preferred (winning) and less preferred (losing) responses, respectively.
Although the latent reward 
$r^*(x,y)$ underlying 
human preference is unknown, the Bradley-Terry (BT) model \citep{BTmodel} provides an effective proxy to capture 
human preference probabilities $p^*$ as follows:
\begin{equation}\label{eqn:BT_model}
    p^*(y_1\succ y_2|x) = \sigma(r^*(x,y_1) - r^*(x,y_2)),
\end{equation}
where $\sigma(\cdot)$ denotes the logistic function.
Therefore, a reward model (RM) $r_\phi(x,y)$ can be parameterized and trained with the preference dataset $\mathcal D$ via a negative log-likelihood loss:
\begin{equation}\label{eqn:rm_loss}
    \mathcal L_R(\phi) = -\mathbb E_{(x,y_w,y_l)\sim\mathcal D}\left[\log\sigma(r_\phi(x,y_w) - r_\phi(x,y_l))\right].
\end{equation}
With the learned reward model providing feedback, the language model $\pi_\theta$ is subsequently optimized via RL using the 
KL-regularized objective:
\begin{equation}\label{eqn:rlhf_obj}
    \max_{\pi_\theta}\mathbb E_{\substack{x\sim\mathcal D\\y\sim\pi_\theta(\cdot|x)}}[r_\phi(x,y)] - \beta \mathbb D_\mathrm{KL}(\pi_\theta(y|x)\|\pi_\mathrm{ref}(y|x)),
\end{equation}
where $\pi_\mathrm{ref}$, typically instantiated via the SFT model $\pi_\mathrm{SFT}$, serves as the reference policy and $\beta$ is a hyperparameter controlling the deviation from the reference model.

\textbf{Direct Preference Optimization (DPO).} DPO \citep{DPO} is a widely adopted 
offline preference optimization methods. Instead of learning an external reward model, DPO reparameterizes the reward function $r(x,y)$ using a closed-form solution to the KL-regularized reward maximization problem:
\begin{equation}\label{eqn:opt_policy}
    r(x,y) = \beta\log\frac{\pi_r(y|x)}{\pi_\mathrm{ref}(y|x)} + \log Z(x),
\end{equation}
where $Z(\cdot)$ serves as the partition function independent of $y$. 
Applying this reparameterization to the current LLM $\pi_\theta$, DPO derives an implicit reward $\hat r_\theta (x,y) = \beta\log\frac{\pi_\theta(y|x)}{\pi_\mathrm{ref}(y|x)}.$  By fitting it with the offline preference dataset like Equation \eqref{eqn:rm_loss}, the DPO loss is defined as:
\begin{equation}\label{eqn:DPO_loss}
    \mathcal L_\mathrm{DPO}(\theta) = -\mathbb E_{(x,y_w,y_l)\sim\mathcal D}\left[\log\sigma(\hat r_\theta (x,y_w) - \hat r_\theta (x,y_l))\right].
\end{equation}
\textbf{Simple Preference Optimization (SimPO).} SimPO \citep{SimPO} builds on DPO by introducing two key modifications to achieve SOTA performance.
First, it proposes a new implicit reward $\hat r_\theta^\mathrm{Sim}(x,y)=\beta\log\pi_\theta(y|x)/|y|$ to eliminate the need for a reference model, 
where $|y|$ denotes the length of response $y$. 
Such a formulation can be seen as a length-normalized special case of DPO's implicit reward with a uniform reference model \citep{alphadpo}.
Second, it introduces a classification margin term $\gamma$ into the objective. Consequently, the SimPO loss is formulated as follows:
\begin{align}\label{eqn:SimPO_loss}
    &\mathcal L_\mathrm{SimPO}(\theta) =\\
    &-\mathbb E_{(x,y_w,y_l)\sim\mathcal D}\left[\log\sigma(\hat r_\theta ^\mathrm{Sim}(x,y_w) - \hat r_\theta^\mathrm{Sim}(x,y_l) - \gamma)\right].\notag
\end{align}

\textbf{Data quality metrics.} Here we study the data quality evaluation problem in offline preference optimization. 
We denote the metric by $M(x,y_w,y_l;\theta)$ to assess the quality of preference data $(x,y_w,y_l)$ for the LLM model $\pi_\theta$.
To evaluate different metrics, we can benchmark models trained with the top-$k$ subset of the original preference dataset \citep{a_recipe}, denoted as $\mathcal D_k  \mathrm{:=}  \{(x,y_w,y_l)|M(x,y_w,y_l;\theta) \ge \tau_k\}$, where $\tau_k$ is the $k$-th highest score within the dataset.
For model training, we mainly employ DPO and SimPO as optimization methods due to their verified efficacy.


\section{The \MethodName{} Metric}
This section introduces our \methodname{} metric, devised to assess the quality of preference data $(x,y_w,y_l)$ for alignment. \S\ref{sec:derive_AP_metric} elaborates on the formulation of \methodname{} metric. \S\ref{sec:metric_exp} empirically evaluates the performance of various metrics by using them to select preference data for training. \S\ref{sec:theoretic_analysis} provides a theoretical analysis justifying the proposed metric's ability to identify high-quality data for model alignment.

\subsection{Metric Formulation}\label{sec:derive_AP_metric}

Preference optimization methods (\eg DPO, SimPO) leverage preference datasets to transform unaligned LMs into models that align with human values.
A natural principle for evaluating the quality of preference data $(x,y_w,y_l)$ is to measure how much the model can improve its alignment with human preferences by learning from such data.
This can be essentially quantified by measuring the discrepancy between the current model and an aligned optimal state on that data.
For the aligned optimal policy $\pi_\theta^*$, Equation~\eqref{eqn:opt_policy} describes an optimal condition:
$$r^*(x,y) = \hat r_\theta^*(x,y) + \log Z(x),$$
which leads to the following equivalence for any preference data $(x,y_w,y_l)$:
\begin{equation}\label{eqn:opt_equiv}
    r^*(x,y_w) - r^*(x,y_l) \equiv \hat r_\theta^*(x,y_w) - \hat r_\theta^*(x,y_l).
\end{equation}
Based on this, the discrepancy between the current model and the aligned optimal on any given preference data can be quantified by the $\ell_1$-distance, resulting in the $M_1$ metric:
\begin{align}\label{eqn:M_ideal_def}
    &M_1(x,y_w,y_l;\theta,r^*)\\ 
    \mathrm{:=}&|(\hat r_\theta^*(x,y_w) - \hat r_\theta^*(x,y_l))- (\hat r_\theta(x,y_w) - \hat r_\theta(x,y_l))|\notag\\
    =&|\underbrace{(r^*(x,y_w) - r^*(x,y_l))}_\text{given by aligned optimum $\pi_\theta^*$} - 
    \underbrace{(\hat r_\theta(x,y_w) - \hat r_\theta(x,y_l))}_\text{given by current model $\pi_\theta$}|.\notag
\end{align}
Here the term $\hat r_\theta(x,y_w) - \hat r_\theta(x,y_l)$ represents the reward margin (without absolute value) between $y_w$ and $y_l$ as evaluated by the model $\pi_\theta$, indicating the model's preference judgment between $y_w$ and $y_l$ \citep{DPO,active_pref_learn}.
Consequently, $M_1$ measures the gap between preferences given by the current model $\pi_\theta$ and the aligned optimum $\pi_\theta^*$ for the data $(x,y_w,y_l)$.
With this metric, a larger gap indicates greater potential for improvement (\ie high-quality data), while a smaller gap suggests less need for enhancement (\ie low-quality data).

Nevertheless, direct computing $M_1$ requires access to the latent, inaccessible human preference reward $r^*$.
To make this metric practically applicable, we propose using a reward model $r(\cdot)$ for guidance and introduce two key adaptations for practical offline preference learning: \textit{unidirectional calibration} and \textit{reward noise regularization}.

\textbf{Unidirectional calibration.}
In typical offline preference learning scenarios (\eg DPO), preferences $y_w\succ y_l|x$ in the dataset are annotated by an external reward model such that $r(x,y_w) > r(x,y_l)$, rather than being sampled from real human preference probabilities as in Equation~\eqref{eqn:BT_model}.
Given this deterministic nature, objectives like DPO and SimPO adopt a \textit{unidirectional} update strategy to consistently increase the margin $\hat r_\theta(x,y_w) - \hat r_\theta(x,y_l)$ (\cf  Equation~\eqref{eqn:DPO_loss}), as opposed to regressing the target aligned optimum $\hat r^*(x,y_w) - \hat r^*(x,y_l)$  \citep{DNO}.
In contrast, the absolute value in $M_1(x,y_w,y_l)$ function measures a \textit{bidirectional} gap relative to the centering optimum $r^*(x,y_w) - r^*(x,y_l)$.
To reconcile this, we apply a unidirectional calibration by omitting the absolute value from $M_1(x,y_w,y_l)$:
\begin{align}\label{eqn:M_+_def}
    &M_+(x,y_w,y_l;\theta,r^*) \\
    \mathrm{:=} & (r^*(x,y_w) - r^*(x,y_l)) - (\hat r_\theta(x,y_w) - \hat r_\theta(x,y_l)).\notag
\end{align}
Large $M_+(x,y_w,y_l)$ values thus highlight preference data where $\hat r_\theta(x,y_w) - \hat r_\theta(x,y_l)$ is small, implying a need for increases consistent with offline preference methods.

\textbf{Reward noise regularization.}
Moreover, substituting $r^*$ with $r$ can introduce reward model noise \citep{rlhf_openproblems},
which might lead to erroneous preference annotations \citep{rlhf-stiennon2020learning, reward_overopt}.
To illustrate this phenomenon, we present a preference data example from SimPO's dataset in Figure~\ref{fig:noise_example} (further details are provided in Table~\ref{example:rm_wrong} in Appendix~\ref{appendix:examples}).
In this example, the reward model erroneously assigns a higher reward to the incorrect answer.
In contrast, the implicit reward given by the LLM $\pi_\theta$ in this example correctly identifies the preferable answer, reflecting LLM's ability to discern certain preferences, as supported by recent studies \citep{boostrapping_implicit_r, CDR_annotation}.

\begin{figure}[t]
    \centering
    \includegraphics[width=0.98\linewidth]{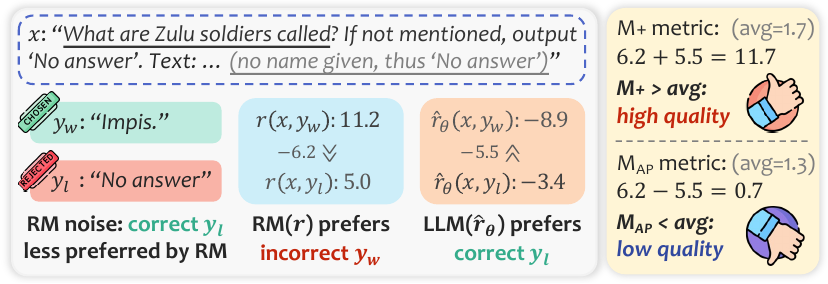}
    \vspace{-5pt}
    \caption{Reward noise example from SimPO's dataset, where the less preferred response $y_l$ by RM is the correct answer. $M_+$ will mislabel this data as high-quality, which the $M_{AP}$ metric avoids.}
    \label{fig:noise_example}
    \vspace{-5pt}
\end{figure}

This inspires us to revisit the $M_+$ metric.
Clearly, it tends to prioritize data where $r(x,y_w) - r(x,y_l) \gg 0$ (\ie $y_w$ is strongly preferred by RM) while $\hat r_\theta(x,y_w) - \hat r_\theta(x,y_l) \ll 0$ (\ie $y_w$ is strongly disliked by LLM). 
While such strong contradictions may suggest mistaken preference judgments by the LLM that necessitate alignment training, 
they also risk amplifying faulty preference annotations caused by the reward model noise. 
Considering the aforementioned example again where the reward model assigns incorrect rewards, the RM assigns $r(x,y_w) - r(x,y_l) = 6.2$ while the LLM evaluates $\hat r_\theta(x,y_w) - \hat r_\theta(x,y_l) = -5.5$, resulting in an inflated $M_+$ score of $11.7$, nearly seven times the dataset's average $M_+$ score.
As a result, this noisy data would be incorrectly rated as ``high quality'' by $M_+$. 

The root issue lies in the contradictions between reward model annotations and LLM judgments, which may signal reward model noise when the LLM can accurately discern certain preferences.
To address this, we revise M+ by reintroducing absolute values as a regularizer, leading to the proposed \methodname{} metric:
\begin{align}\label{eqn:DAP_def}
    &M_{AP}(x,y_w,y_l;\theta,r) \\
    \mathrm{:=} & |r(x,y_w) - r(x,y_l)| - |\hat r_\theta(x,y_w) - \hat r_\theta(x,y_l)|\notag.
\end{align}
This adjustment ensures that data with high $M_{AP}$ scores indicate substantial reward improvements, while the LLM shows uncertainty in preference determination (\ie similar implicit rewards).
It better identifies data with alignment potential and avoids unreliable contradictory annotations.

For the previous example, our \methodname{} metric yields a significantly lower score $M_{AP} = 0.7$, below the dataset's average $\overline{M}_{AP}\approx 1.3$.
Consequently, this noisy preference data is rated as ``low quality'' by the $M_{AP}$ metric, offering robust data quality evaluation against reward noise. 
Furthermore, we validate the regularization term in $M_{AP}$ by applying both $M_+$ and $M_{AP}$ metrics to rate and select top-10\% preference pairs in SimPO's dataset and prompting GPT-4 to reassess the selected pairs against the original reward-model annotations. 
As shown in Figure~\ref{fig:gemma_agree}, pairs selected by $M_{AP}$ exhibit significantly higher agreement with 
GPT-4, indicating reduced annotation noise\footnote{While GPT-4 might also make mistakes, alignment with its preferences suggests at least a lower likelihood of annotation noise.}, and evidencing the regularization's effectiveness in $M_{AP}$.

\begin{figure}[t]
    \centering
    \includegraphics[width=0.95\linewidth]{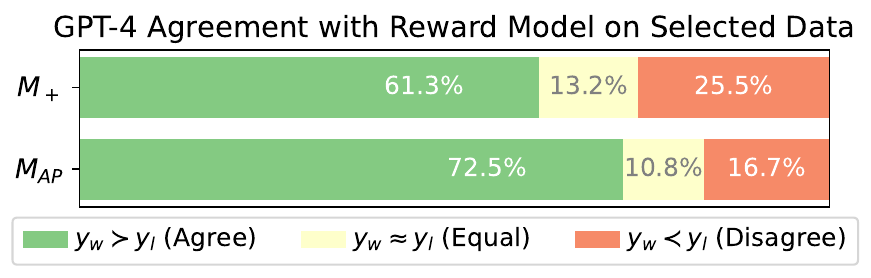}
    \vspace{-5pt}
    \caption{GPT-4 agreement with reward model's annotation on data selected by $M_+$ or $M_{AP}$ metric from \href{https://huggingface.co/datasets/princeton-nlp/gemma2-ultrafeedback-armorm}{SimPO's dataset}. The data selected by $M_{AP}$ has a notably higher agree rate than $M_+$, indicating less reward annotation noise.}
    \label{fig:gemma_agree}
    \vspace{-8pt}
\end{figure}

\textbf{Relationship with existing metrics.}
The proposed \methodname{} metric can be split into two components: the target \textit{explicit reward margin} term $M_r = |r(x,y_w) - r(x,y_l)|$, derived from aligned optimum $\pi_\theta^*$, and the current \textit{implicit reward margin} term $M_\pi = |\hat r_\theta(x,y_w) - \hat r_\theta(x,y_l)|$, given by current model $\pi_\theta$:
$$
M_{AP} = \underbrace{|r(x,y_w) - r(x,y_l)|}_{\substack{\text{target explicit reward margin} \\ \text{(by aligned optimum $\pi_\theta^*$)}}} - 
\underbrace{|\hat r_\theta(x,y_w) - \hat r_\theta(x,y_l)|}_{\substack{\text{current implicit reward margin} \\ \text{(by current model $\pi_\theta$)}}}
$$
While the $M_{AP}$ metric measures the gap from the current implicit reward margin to the target explicit reward margin, thereby serving as an indicator for alignment potential and data quality, each of these margins has been independently utilized as a data quality metric.
Empirical studies have validated that selecting data with \textit{larger} $M_r$ \citep{reward_diff_dpo, eva} or \textit{smaller} $M_\pi$ \citep{a_recipe} for training improves alignment performance, which is also consistent with the trend described in $M_{AP}.$
Additionally, as $\hat r_\theta(x,y)=\beta\log\frac{\pi_\theta(y|x)}{\pi_\mathrm{ref}(y|x)}$ contains a hyper-parameter $\beta$, the ranking induced by $M_{AP}$ converges with $M_r$ as $\beta\to 0$, and with $-M_\pi$ as $\beta\to\infty$.
Therefore, the $M_{AP}$ metric offers an integrated metric unifying both the explicit and implicit reward perspectives.
More importantly, it underscores the potential for alignment by assessing the discrepancy between target and current reward margins, rather than relying on a singular metric.

\subsection{Empirical Evaluation for Data Metrics}\label{sec:metric_exp}
In this section, we conduct a preliminary experiment to evaluate different data quality metrics, using them to select preference data for alignment training. 
The results 
demonstrate the superior ability of our metric to select high-quality data, leading to improved alignment performance across multiple benchmarks and models.

\textbf{Models and datasets.} We focus on preference learning for the Llama-3-8b \citep{llama3report}  and Gemma-2-9b \citep{gemma2} models. Following \citet{SimPO}, we employ pre-trained instruction-tuned models as SFT models and utilize the same preference datasets in SimPO: \href{https://huggingface.co/datasets/princeton-nlp/llama3-ultrafeedback-armorm}{llama3-ultrafeedback-armorm} and 
\href{https://huggingface.co/datasets/princeton-nlp/gemma2-ultrafeedback-armorm}{gemma2-ultrafeedback-armorm},
for llama and gemma models respectively.
These datasets are generated by sampling responses from LLMs for prompts in the UltraFeedback dataset \citep{cui2024ultrafeedback} and then annotating preferences with the ArmoRM reward model \citep{ArmoRM}.
Additionally, SimPO provides another llama-based dataset with the preferences annotated using the PairRM reward model \citep{pairRM}: \href{https://huggingface.co/datasets/princeton-nlp/llama3-ultrafeedback}{llama3-ultrafeedback}, which we include 
as an ablation study on different reward models.

\begin{figure}[t]
    \centering
    \includegraphics[width=0.95\linewidth]{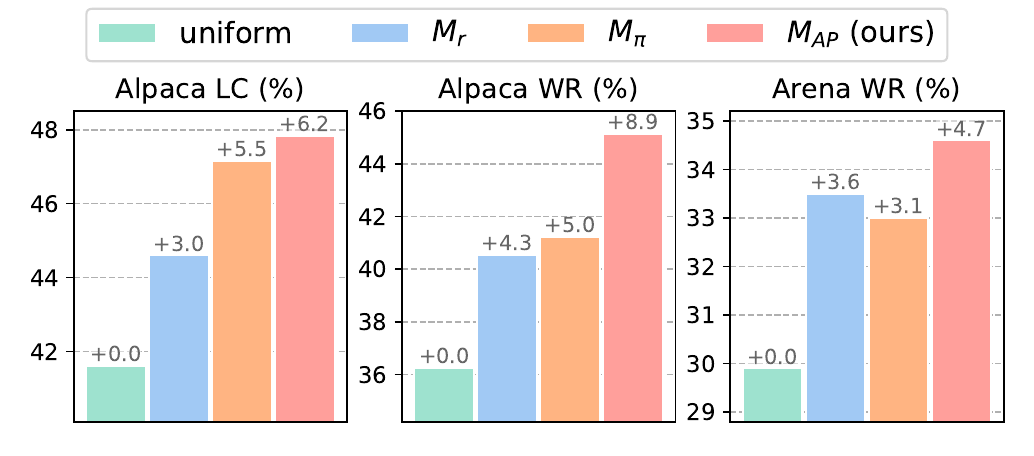}
    \vspace{-5pt}
    \caption{Performance of Llama-3-8b-instruct model trained on preference pairs selected by different metrics.}
    \label{fig:llama_select_by_metrics}
    \vspace{-15pt}
\end{figure}

\textbf{Data quality metrics.} 
Various data quality metrics are utilized to score preference pairs and select the top-40\% subset (approximately 24k pairs) from the SimPO datasets for further training. 
We compare the proposed \methodname{} metric $M_{AP}$ against the explicit reward margin $M_r$ metric, the negative\footnote{Since $M_\pi$ prioritizes data with smaller scores, we add a negative sign for consistency with the top-$k$ selection procedure.} implicit reward margin $-M_\pi$ metric, and a uniform baseline that randomly samples 40\% of the data for training.
Additionally, when calculating data metrics before training, the model $\pi_\theta$ used for calculating implicit rewards $\hat r_\theta$ is initialized from the reference model $\pi_\mathrm{ref}$, making the DPO's implicit reward $\hat r_\theta = \beta\log\frac{\pi_\theta(y|x)}{\pi_\mathrm{ref}(y|x)}$ constant zero.
To address this, we utilize the SimPO's implicit reward $\hat r_\theta^\mathrm{Sim}(x,y)=\beta\log\pi_\theta(y|x)/|y|$ for practical implementation across this paper.
Further implementation details are provided in Appendix~\ref{appendix:implement}.

\textbf{Training and Evaluation.}
We primarily adopt SimPO for preference optimization and include DPO as an ablation study concerning different optimization objectives. 
After preference learning, evaluations are conducted using two widely recognized open-ended instruction-following benchmarks: AlpacaEval 2 \citep{alpacaeval2} and Arena-Hard \citep{arenahard}. 
These benchmarks assess the model's general conversational abilities across diverse queries and are extensively used in the research community. 
Our results report the win rate for Arena-Hard and both length-controlled (LC) and raw win rates (WR) for AlpacaEval 2.

\textbf{Empirical Results.} 
Our \methodname{} metric consistently outperforms existing data quality metrics:
As depicted in Figures~\ref{fig:llama_select_by_metrics} and \ref{fig:gemma_select_by_metrics}, while all three metrics can enhance the alignment performance, our \methodname{} metric $M_{AP}$ achieves the highest performance across all benchmarks and models.
Particularly, models trained on data selected via our metric significantly outperform the uniform baseline by 2.5 and 6.2 points on Alpaca LC, and by 4.7 and 7.1 points on Arena WR.
Such notable performance enhancement underscores the effectiveness and robustness of our metric in identifying high-quality preference data for alignment training.

\textbf{Ablation Study.}
We conduct in-depth ablation studies to rigorously evaluate various data quality metrics under different reward models and training objectives (\cf Appendix~\ref{appendix:metric_ablation}).
As illustrated in Figure~\ref{fig:metric_ablation}, our \methodname{} metric consistently surpasses existing baselines even with these variations, further validating its efficacy.


\begin{figure}[t]
    \centering
    \includegraphics[width=0.95\linewidth]{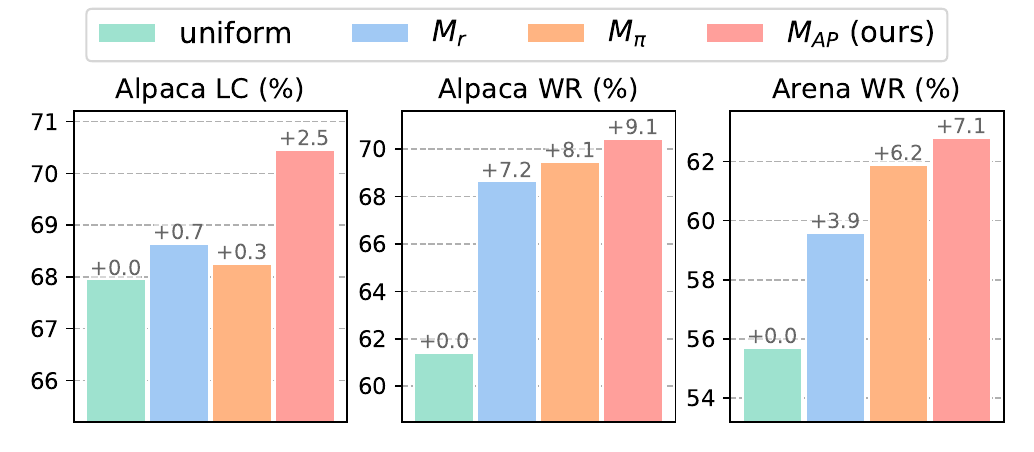}
    \vspace{-5pt}
    \caption{Performance of Gemma-2-9b-it model trained on preference pairs selected by different metrics.}
    \label{fig:gemma_select_by_metrics}
    \vspace{-10pt}
\end{figure}

\subsection{Theoretical Analysis}\label{sec:theoretic_analysis}

Building upon the empirical results showcasing enhanced alignment with data selected by our \methodname{} metric, we now provide a theoretical explanation for its effectiveness.
With the preference learning process transforming unaligned LLM $\pi_\theta$ into the aligned optimum $\pi_\theta^*$, our proposed metric, which selects preference data with a larger gap between $\pi_\theta$ and $\pi_\theta^*$, serves as an adversarial sampler prioritizing ``hard'' examples for training. 
Compared with uniform sampling from the dataset, such a philosophy can intuitively speed up preference learning by prioritizing data with the largest optimization gaps, which requires the most update, to train on. 
Below, we rigorously substantiate this intuition through the lens of stochastic optimization.


\textbf{Contextual bandit setting.}
Following the standard theoretical frameworks in RLHF \citep{DPO, NLHF, DNO}, we reformulate the problem as a contextual bandit, with a context (prompt) space $\mathcal X$, an arm (response) space $\mathcal Y$ and a reward function $r:\mathcal X\times\mathcal Y\to[0,1]$. 
A policy (LLM) $\pi:\mathcal X\to\Delta\mathcal Y$ maps each context to a probability simplex over the arm space, which is parameterized with $\theta:\mathcal X\times\mathcal Y\to\mathbb R$ using the softmax transformation \citep{softmax_policy} akin to LLMs: $\pi_\theta(y|x) = \frac{\exp(\theta(x,y))}{\sum_{y'\in\mathcal Y}\exp(\theta(x,y'))}$.
The optimization objective is to maximize the KL-regularized reward, as expressed in Equation~\eqref{eqn:rlhf_obj}, with the optimal condition given by: $r(x,y) = \hat r_{\theta^*}(x,y) + \log Z(x).$
To quantify the distance between $\theta$ and $\theta^*$ for theoretical analysis, we utilize the $M_1(x,y,y'|\theta,r)$ metric, which measures the discrepancy between $\pi_\theta$ and optimal policy $\pi_{\theta^*}$ on any preference pairs $(x,y,y')$, and define the following error function:
$$
\mathrm{Dist}(\theta,\theta^*) = \sqrt{\frac{1}{|\mathcal X||\mathcal Y|^2}\sum_{x\in\mathcal Y}\sum_{y,y'\in\mathcal Y}M_1(x,y,y';\theta,r)^2}.
$$
\textbf{Optimization methods.}
We investigate convergence rates of different sampling strategies when optimizing the policy $\pi_\theta$ using the DPO loss, whose population-form is defined as \citep{IPO}:
$$
\min_{\pi_\theta} \mathbb E_{(x,y,y')\sim s}[-p(y\succ y'|x)\log\sigma(\hat r_\theta(x,y) - \hat r_\theta(x,y'))],
$$
where $s: \mathcal X\times \mathcal Y^2 \to [0,1]$ represents the sampling probability, and $p(y\succ y'|x) = \sigma(r(x,y) -r(x,y'))$ denotes the preference given by BT model.
We apply stochastic gradient descent to optimize the DPO loss and compare the performance of two samplers: a uniform sampler $s_u(x,y,y') = \frac{1}{|\mathcal X||\mathcal Y|^2}$ and an adversarial sampler that selects preference data $(x,y,y')$ with the highest $M_1(x,y,y')$ score.
Note that we directly employ the original $M_1$ metric within this theoretical setting, with no need for other calibrations specially designed for practical offline preference learning scenarios (\ie $M_+$ or $M_{AP}$).

\textbf{Convergence rates.}
Our theoretical analysis reveals that by selecting preference data $(x,y,y')$ with the largest gap, \ie the $M_1(x,y,y')$ score, the adversarial sampler achieves more than 2 times faster convergence than the uniform sampler. 
Formally, we state the following theorem (the proof is provided in Appendix~\ref{appendix:proof}):
\begin{theorem}\label{thm:convergence}
Let $T_u(\varepsilon)$ and $T_{adv}(\varepsilon)$ be the (expected) iterations required for the error $\mathrm{Dist}(\theta^t,\theta^*)$ to reduce to $\varepsilon \mathrm{Dist}(\theta^0,\theta^*)$ under uniform and adversarial sampling, respectively. 
With optimal learning rates, we have:
$$
T_{adv}(\varepsilon)< 0.5 T_u(\varepsilon).
$$
\end{theorem}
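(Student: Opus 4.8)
The plan is to reduce both samplers to a standard SGD-on-a-strongly-convex-objective convergence analysis, where the per-step contraction factor is governed by the ratio of the gradient's ``used'' variance to the smoothness constant, and then show that the adversarial sampler's effective Polyak--{\L}ojasiewicz / strong-convexity constant is at least a factor of $2$ larger than the uniform one. First I would set up the loss landscape explicitly: with the softmax parametrization $\pi_\theta(y|x) = \exp(\theta(x,y))/\sum_{y'}\exp(\theta(x,y'))$ and implicit reward $\hat r_\theta(x,y) = \beta\log\frac{\pi_\theta(y|x)}{\pi_{\mathrm{ref}}(y|x)}$, the reward margin $\hat r_\theta(x,y) - \hat r_\theta(x,y') = \beta(\theta(x,y)-\theta(x,y')) - \beta(\theta_{\mathrm{ref}}(x,y)-\theta_{\mathrm{ref}}(x,y'))$ is \emph{affine} in $\theta$, so the population DPO objective $\mathbb E_{(x,y,y')\sim s}[-p(y\succ y'|x)\log\sigma(\hat r_\theta(x,y)-\hat r_\theta(x,y'))]$ is a convex function of $\theta$ (composition of the convex $-\log\sigma$ with an affine map, weighted by nonnegative $p$). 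Crucially, $M_1(x,y,y';\theta,r)$ is exactly (up to the $\beta$ factor and the absolute value) the distance of the current margin from its optimal value, and the gradient of the per-sample loss in the direction $e_{x,y}-e_{x,y'}$ is, by a one-line computation, proportional to $p(y\succ y'|x) - \sigma(\hat r_\theta(x,y)-\hat r_\theta(x,y'))$, whose magnitude is a monotone (sigmoid-derivative-weighted) function of $M_1(x,y,y')$. This is what makes ``largest $M_1$'' literally ``largest gradient coordinate.''

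Next I would write one SGD step for each sampler and compute the expected decrease of $\mathrm{Dist}(\theta^t,\theta^*)^2$. For the uniform sampler, a textbook argument gives $\mathbb E[\mathrm{Dist}(\theta^{t+1},\theta^*)^2] \le (1 - \eta_u \mu_u)\,\mathrm{Dist}(\theta^t,\theta^*)^2$ for a suitable step size $\eta_u$, where $\mu_u$ is essentially the average curvature; optimizing $\eta_u$ gives a rate $\rho_u = 1 - c\,\mu_u/L$. For the adversarial sampler, the per-step progress is driven by the \emph{single largest} squared margin-gap term $\max_{x,y,y'} M_1(x,y,y')^2$, whereas the sum in $\mathrm{Dist}^2$ averages over all $|\mathcal X||\mathcal Y|^2$ pairs; I would invoke a comparison lemma showing that picking the max coordinate of a gradient contracts the corresponding quadratic form at least twice as fast as sampling a coordinate uniformly — intuitively because the max of nonnegative numbers is at least twice their mean whenever the distribution is nondegenerate, and more carefully because coordinate descent on the largest coordinate (Gauss--Southwell) is known to beat random coordinate descent by exactly this kind of ratio. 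Translating iteration counts, $T(\varepsilon) = \log(1/\varepsilon)/\log(1/\rho)$, so $T_{adv} < 0.5\, T_u$ follows once $\log(1/\rho_{adv}) > 2\log(1/\rho_u)$, i.e. $\rho_{adv} < \rho_u^2$, which I would establish from the curvature-ratio bound together with the independence of the two objectives' ``effective'' terms (they share the same global optimum $\theta^*$ because $p(y\succ y'|x) = \sigma(r(x,y)-r(x,y'))$ is the BT preference, so the population minimizer of \emph{both} weighted losses coincides with the $\theta$ satisfying the optimal condition).

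The \textbf{main obstacle} I anticipate is making the factor of exactly $2$ (rather than some unspecified constant $>1$) rigorous, since the naive ``max $\ge$ 2·mean'' is false in general and only holds under a spread/nondegeneracy condition on the $M_1$ values across pairs — or after a more delicate argument that accounts for how the adversarial sampler, over a \emph{sequence} of steps, keeps hitting whichever gap is currently largest and thereby drives the whole vector of gaps down geometrically with a better constant. I expect the proof to handle this by either (i) a potential-function / amortized argument à la Gauss--Southwell showing $\|\cdot\|_\infty$-steepest-coordinate descent satisfies a contraction with the $\ell_\infty$-to-$\ell_2$ norm-equivalence baked in, which on $d = |\mathcal X||\mathcal Y|^2$ coordinates yields the $1/d$ vs. ``$1$'' gap that dominates the $2\times$ claim once $d$ is not tiny, or (ii) restricting attention to the regime where the sigmoid is well-approximated by its linear part (so the objective is genuinely quadratic and the comparison is the clean random-vs-greedy coordinate descent one) and controlling the approximation error. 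A secondary nuisance is choosing and comparing the ``optimal learning rates'' for the two schemes consistently — the adversarial step can afford a larger step size because its stochastic gradient has smaller variance relative to its mean (it is deterministically the biggest one), and I would quantify this variance-reduction effect to show it does not erode, but rather reinforces, the $2\times$ speedup.
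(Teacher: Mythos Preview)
Your high-level framing --- view the update as coordinate descent on the vector of margin gaps $\xi_y^t := \beta\theta^t(x,y) - r(x,y)$, then compare greedy (Gauss--Southwell) to random coordinate selection --- is exactly the right picture. However, the piece you flag as the ``main obstacle'' is precisely where your plan has a genuine gap, and the resolution is much cleaner than the amortized or linearized arguments you propose.

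The issue is that you are looking at the wrong inequality. You worry that ``$\max \ge 2\cdot\text{mean}$'' fails for general nonnegative numbers, and this is true. But the quantities being compared here are not arbitrary: the per-step decrease (after choosing the optimal step size) is proportional to $(\xi_y^t - \xi_{y'}^t)^2$ for the sampled pair, the uniform sampler's expected decrease is the \emph{average pairwise squared difference}, which equals $2V^t$ with $V^t$ the variance of $\xi^t$, and the adversarial sampler always picks the pair achieving $\max_{y,y'}(\xi_y^t-\xi_{y'}^t)^2 = (\max_y\xi_y^t - \min_y\xi_y^t)^2$, the squared range. The needed comparison is therefore $(\text{range})^2 \ge 4\cdot\text{variance}$, which is Popoviciu's inequality and holds \emph{always}, with no nondegeneracy assumption. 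The paper proves it in two lines: since $(\xi_{y^*}^t-\xi_y^t)(\xi_{y'^*}^t-\xi_y^t)\le 0$ for every $y$, summing gives $m^tM^t + V^t \le 0$, and then $(m^t-M^t)^2\ge -4m^tM^t\ge 4V^t$. This yields the clean rates $(1-1/|\mathcal Y|)$ versus $(1-2/|\mathcal Y|)$, and the elementary inequality $(1-1/n)^2 > 1-2/n$ finishes the factor-$2$ claim. None of the general PL/strong-convexity machinery, amortization, or sigmoid linearization is needed.

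A second, smaller gap: the DPO loss is convex but \emph{not} strongly convex (the sigmoid saturates), so your ``$\mu/L$'' contraction story would not go through as stated. The paper sidesteps this by taking the learning rate adaptively per step, $\eta^t = 1/(\beta^2\sigma'(\lambda_{yy'}^t))$ where $\lambda_{yy'}^t$ comes from the mean value theorem applied to $\sigma(r_y-r_{y'}) - \sigma(\beta\theta_y^t-\beta\theta_{y'}^t)$; this makes the one-step decrease in $V^t$ exactly $-\frac{1}{2|\mathcal Y|}(\xi_y^t-\xi_{y'}^t)^2$ regardless of where on the sigmoid you are, so the analysis becomes purely quadratic without any linearization error to control.
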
\vspace{-5pt}
We also conduct numerical experiments to validate this theoretical result. As illustrated in Figure~\ref{fig:converge_numerical}, the error reduction rate with the adversarial sampler significantly outperforms that of the uniform sampler, thereby confirming the theoretical acceleration. 
More importantly, such an acceleration provides a theoretical justification for selecting data with higher \methodname{} scores for training.

\section{High-Quality Data Generation}

The proposed data metric extends beyond merely selecting high-quality data subsets in \textit{existing} datasets; 
it also applies to \textit{additional} data scenarios, \ie to generate more high-quality data for alignment training.
To facilitate this, we integrate our proposed \methodname{} metric within the evolving alignment (eva) framework \citep{eva}, a self-play data generation technique that employs LLM 
to generate new prompts $x$ and associated response pairs $(y_w,y_l)$ to augment high-quality preference data.

Similar to SimPO, the eva framework begins by constructing 
a preference dataset $\mathcal D=\{(x,y_w,y_l)\}$ by sampling LLM's responses $(y_1,y_2)$ to an existing prompts set $x\in\mathcal X$ (\eg UltraFeedback \citep{cui2024ultrafeedback}), and annotating preferences with a reward model such that $r(x,y_w) > r(x,y_l).$
Then eva augments a high-quality subset within this dataset through a \textbf{select-then-evolve} pipeline, which can be delineated as follows:
\begin{enumerate}[left=0pt,topsep=-2pt,itemsep=-5pt]
\item Initially, eva evaluates each preference pair $(x,y_w,y_l)$ in $\mathcal D$ using the explicit reward margin metric $M_r$.
The top-$k$ subset $\mathcal D_k$ is then selected based on their $M_r$ scores. 
\item Subsequently, eva leverages the LLM itself to rewrite selected prompts $\mathcal X_k$ in $\mathcal D_k$, yielding a set of newly generated prompts $\mathcal X_k'$. 
Therefore a new preference dataset $\mathcal D' = \{(x',y_w',y_l')\}$ can be generated with a similar sample and annotation process as $\mathcal D$.
\end{enumerate}
To generate new prompts $\mathcal X_k'$, eva employs the off-the-shell instructions from EvolInstruct \citep{EvolInstuct}, a popular method for autonomous prompt synthesis with LLM, which involves querying the LLM for in-depth and in-breadth evolving of existing prompts (\cf Appendix~\ref{appendix:implement}). 
The data generation methodology above allows eva to augment data selected by $M_r$ and generate additional preference datasets for alignment training.

Our proposed $M_{AP}$ metric can be seamlessly integrated into this framework for identifying high-quality data, by replacing the $M_r$ metric during the prompts selection phase of eva.
Nonetheless, the subsequent data evolving process does not currently involve data quality evaluation through any metric, thus failing to guarantee the quality of the evolved dataset. 
To address this shortcoming, we modify this self-play procedure by reordering the two phases of eva:
We first evolve the existing prompts $\mathcal X$ in $\mathcal D$ to derive additional dataset $\mathcal D'$ and then select a high-quality subset  $\mathcal D'_k$ from the newly evolved dataset.
This \textbf{evolve-then-select} modification ensures that the resultant dataset $\mathcal D_k'$ aligns with our data quality metric, offering a more rigorous validation of our metric in the data generation scenarios.
Additionally, this data generation and selection process can be iteratively conducted akin to eva, which we formally detail in Algorithm~\ref{algo:eva}.

\setlength{\textfloatsep}{5pt}
\begin{algorithm}
\caption{Eva with \textit{Evolve-then-Select} Pipeline}
\label{algo:eva}
\begin{algorithmic}
\STATE \textbf{Input:}  SFT model $\pi_{\mathrm{SFT}}$, reward model $r$, existing prompts $\mathcal{X}$ and data quality metric $M$.
\STATE Initialize $\pi_{\theta_0}$ with $\pi_{\mathrm{SFT}}$.
\FOR{iteration $t=1,\ldots,T$}
  \STATE preference dataset: $\mathcal{D}\leftarrow \mathrm{GenDataset}(\mathcal{X}, \pi_{\theta_{t-1}},r)$.
  \STATE evolved prompts: $\mathcal{X}' \leftarrow \mathrm{EvolInstruct}(\mathcal{X})$.
  \STATE additional dataset: $\mathcal{D}'\leftarrow \mathrm{GenDataset}(\mathcal{X}', \pi_{\theta_{t-1}},r)$.
  \STATE top-$k$ dataset selected by $M$ metric:\\
  \hspace{1em} $\mathcal{D}_k' = \{(x,y_w,y_l)\in\mathcal D | M(x,y_w,y_l;\theta_{t-1}) \ge \tau_k\}$,\\
  \hspace{1em} with $\tau_k$ being the $k$-th largest $M$ score in $\mathcal D'$.
  \STATE preference optimization on $\mathcal{D} \cup \mathcal{D}_k'$: \\
  \hspace{1em} $\theta_t \leftarrow \theta_{t-1} - \eta\nabla_\theta\mathcal L(\theta_{t-1};\mathcal{D} \cup \mathcal{D}_k').$
\ENDFOR
\STATE \textbf{Return:} optimized policy $\pi_{\theta_T}$
\end{algorithmic}
\end{algorithm}


\section{Experiments}
In this section, we present the experimental results utilizing our \methodname{} metric within the context of the eva self-play data generation framework.
Following eva, we use the UltraFeedback dataset \citep{cui2024ultrafeedback} as our source of initial prompts and adopt the ArmoRM \citep{ArmoRM} reward model for preference annotation.
Similar to the setup in \S\ref{sec:metric_exp}, we employ SFT models from the Llama-3-8b and Gemma-2-9b model families, apply SimPO and DPO loss for alignment training, and utilize AlpacaEval 2 and Arena Hard benchmarks for model evaluations.

\subsection{Main Results}\label{sec:main_exp}

We first compare models trained with different datasets under the single-iteration setting.
Utilizing the procedure outlined in Algorithm~\ref{algo:eva}, we begin with 10k prompts from the UltraFeedback dataset to generate the initial uf-10k dataset $\mathcal D$.
Then we evolve new prompts $\mathcal X'$ and construct the additional preference dataset $\mathcal D'$.
Based on this new dataset, we select the top-10k subset using eva and our \methodname{} metric, resulting in the $\mathcal D'_\text{eva-10k}$ and $\mathcal D'_\text{ours-10k}$ datasets.
To evaluate the effectiveness of evolve-selected data against existing datasets, we create an additional dataset, $\mathcal D_\text{uf-10k}$, using supplementary 10k prompts from UltraFeedback.

Table~\ref{table:main_exp_evolve_first} illustrates the alignment performance across models trained on various datasets. 
Here, ``uf-10k'' refers to the initial dataset $\mathcal D$, while ``+uf-10k', ``+eva-10k'' and ``+ours-10k'' denotes models trained on the union of $\mathcal D$ with $\mathcal D_\text{uf-10k}$, $\mathcal D'_\text{eva-10k}$, and $\mathcal D'_\text{ours-10k}$ respectively.
As shown in the table, while all three additional datasets contribute to improved alignment performance, datasets selected via our metric consistently achieve superior results across different base models and training objectives, surpassing both existing datasets and the current SOTA method, eva.
These consistent improvements underscore the effectiveness and robustness of our \methodname{} metric in discerning high-quality data for alignment training.

\setlength{\textfloatsep}{\origtextfloatsep}
\begin{table}[t]
\vspace{-0.7\baselineskip}
\caption{Main results under the \textit{evolve-then-select} self-play setting. 
Selecting data with the \methodname{} metric for training, our method achieves notable alignment improvements, outperforming the UltraFeedback data (uf) and current SOTA method (eva).}
\label{table:main_exp_evolve_first}
\setlength{\tabcolsep}{1.6mm}{
\begin{tabular}{@{}lcccccc@{}}
\toprule
\multirow{3}{*}{\textbf{Method}} & \multicolumn{3}{c}{\textbf{Llama-3-8b-instruct}} & \multicolumn{3}{c}{\textbf{Gemma-2-9b-it}} \\ \cmidrule(lr){2-4} \cmidrule(lr){5-7} 
 & \textbf{AH} & \multicolumn{2}{c}{\textbf{AE 2.0}} & \textbf{AH} & \multicolumn{2}{c}{\textbf{AE 2.0}} \\ \cmidrule(l){2-2} \cmidrule(lr){3-4} \cmidrule(lr){5-5} \cmidrule(lr){6-7} 
 & \textbf{WR} & \textbf{LC} & \textbf{WR} & \textbf{WR} & \textbf{LC} & \textbf{WR} \\ \midrule
SFT & 21.4 & 23.25 & 23.50 & 40.7 & 48.75 & 36.49 \\ \hline
$\text{DPO}_\text{uf-10k}$ & 23.6 & 29.80 & 28.32 & 46.0 & 54.00 & 46.11 \\
+uf-10k & {\ul 35.0} & {\ul 42.67} & {\ul 41.67} & {\ul 56.4} & 62.56 & {\ul 62.17} \\
+eva-10k & 31.4 & 39.97 & 40.74 & 56.0 & {\ul 63.32} & 62.10 \\
\textbf{+ours-10k} & \textbf{35.4} & \textbf{42.82} & \textbf{45.00} & \textbf{56.6} & \textbf{63.88} & \textbf{64.52} \\ \hline
$\text{SimPO}_\text{uf-10k}$ & 24.2 & 28.45 & 26.23 & 45.2 & 56.90 & 43.80 \\
+uf-10k & 28.5 & 35.75 & 32.82 & 53.5 & 66.43 & 64.55 \\
+eva-10k & {\ul 32.6} & {\ul 41.12} & {\ul 39.94} & \textbf{61.0} & {\ul 66.78} & {\ul 66.86} \\
\textbf{+ours-10k} & \textbf{34.6} & \textbf{43.76} & \textbf{42.00} & {\ul 60.9} & \textbf{66.85} & \textbf{68.34} \\ \hline
\end{tabular}}
\vspace{-10pt}
\end{table}


Furthermore, we conduct additional experiments employing eva's original \textit{select-then-evolve} pipeline, which initially selects a high-quality subset $\mathcal D_k$ for data evolving into the additional dataset $\mathcal D_k'$ (more details can be found in Appendix~\ref{appendix:compare_eva}).
While the resultant $\mathcal D_k'$ dataset's quality may not match that of $\mathcal D_k$ due to the intricate data-evolving process, our experiments, as shown in Table~\ref{table:main_exp_select_first}, reveal that evolved datasets based on our metric's selected subsets still achieve the best overall performance. 
This suggests that high-quality initial subsets might positively influence the quality of the evolved data, thus enhancing the model alignment.
Specifically, we illustrate this correlation between model performance and the \methodname{} score of their corresponding datasets in Figure~\ref{fig:perf_potential_ablation} (\cf Appendix~\ref{appendix:compare_eva}). The selected subset $\mathcal D_k$ using our metric yields a higher $M_{AP}$ score in the resultant $\mathcal D_k'$, and the model performance consistently improves with this score, affirming enhanced alignment through high-quality data generation.

\subsection{Scaling with Dataset Size}
Since the main experiments involve only 20k data, we extend our investigation to larger datasets to assess the scalability of the proposed metric.
Within the \textit{evolve-then-select} framework, we leverage LLMs to generate additional preference data based on 20k seed prompts from UltraFeedback and select a top-$k$ subset from the combined dataset with our \methodname{} metric for SimPO training.
For comparison, a full-size dataset built upon the entire UltraFeedback prompts, comprising approximately 60k data, is randomly sampled with varied sizes for training. 

Figure~\ref{fig:data_scaling} illustrates the model performance relative to dataset size. 
In contrast to the gradual performance increase observed with the full UltraFeedback dataset, datasets selected using our metric demonstrate significantly faster improvements. 
Remarkably, training on a 30k subset containing only half the data leads to superior performance compared to the full UltraFeedback dataset, improving 2.3 points in Alpaca LC and 3.4 points in Arena Hard. 
These substantial gains highlight the efficacy of our metric for selecting high-quality data, suggesting a potent avenue for LLM alignment with superior models even using smaller datasets.

\begin{figure}[t]
    \centering
    \includegraphics[width=0.98\linewidth]{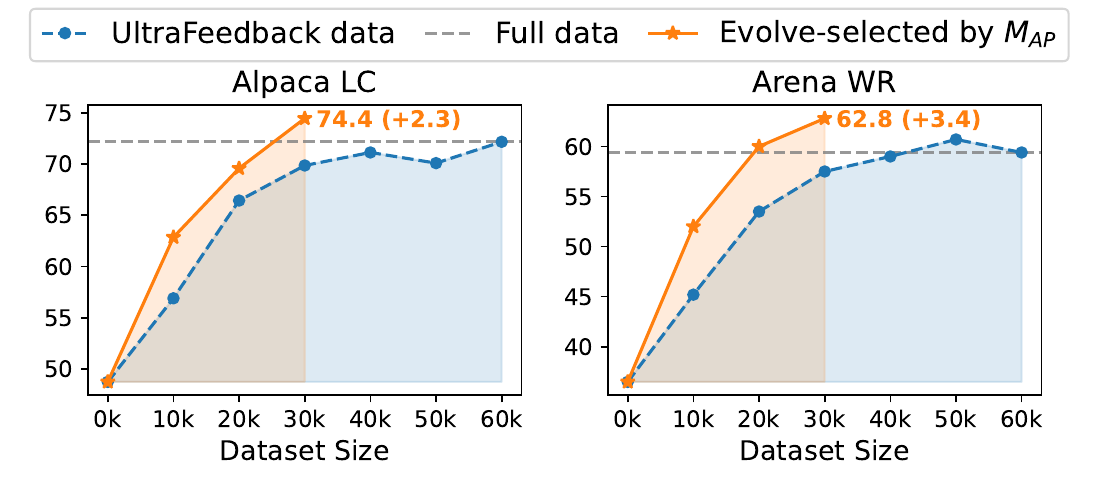}
    \vspace{-10pt}
    \caption{Performance of Gemma models as the dataset size increases.}
    \label{fig:data_scaling}
    \vspace{-5pt}
\end{figure}

\subsection{Scaling with Training Iteration}
Our metric is also applied to multi-iteration settings.
Like the main experiment, we utilize a fixed 10k subset $\mathcal X$ from UltraFeedback to generate the initial dataset $\mathcal D$ for each iteration, and augment it with 10k additional evolve-selected data $\mathcal D_\text{ours-10k}'$, or supplementary data from UltraFeedback $\mathcal D_\text{uf-10k}$, for SimPO training.
We report results for generating $\mathcal D_\text{uf-10k}$ by both fixed and additional prompts from UltraFeedback.
Figure~\ref{fig:iter_scaling} illustrates the continuously increasing performance with more iterations when using our metric, which consistently outperforms the UltraFeedback data, even when extra data are sampled. 

\begin{figure}[ht]
    \centering
    \includegraphics[width=0.98\linewidth]{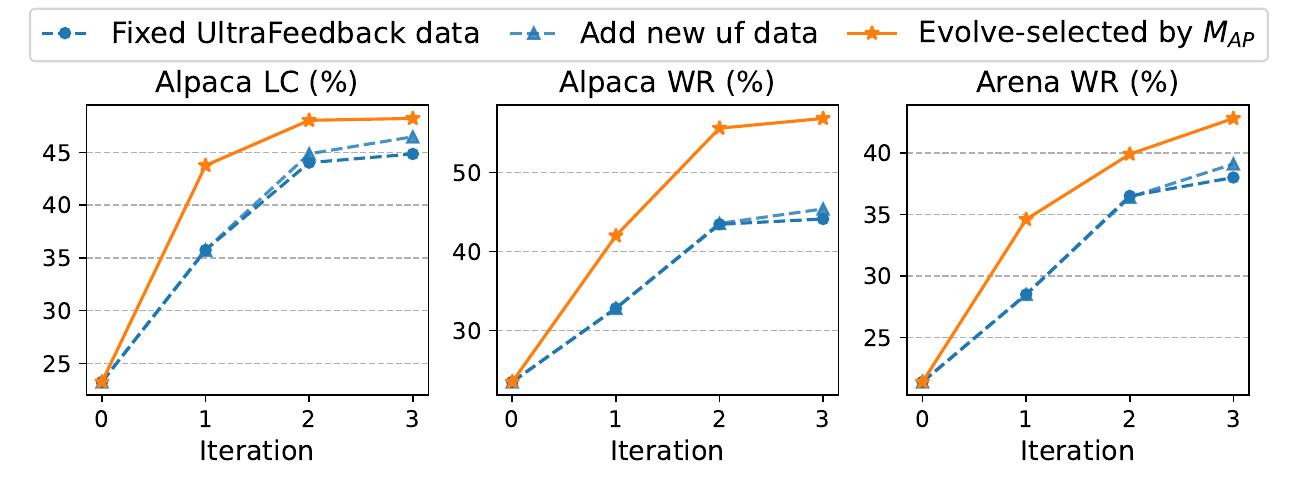}
    \vspace{-10pt}
    \caption{Performance of Llama models trained under iterative preference learning setting.}
    \label{fig:iter_scaling}
    \vspace{-5pt}
\end{figure}

\vspace{-5pt}
\section{Discussion}
\textbf{Conclusion.}
In this work, we propose the \textit{\methodname{}} metric, $M_{AP}$, to evaluate preference data quality in alignment.
By measuring the gap from the model's current implicit reward margin to the target explicit reward margin, $M_{AP}$ quantifies the discrepancy between the current model and the aligned optimum, thereby indicating the potential for alignment enhancement.
Extensive experiments validate the efficacy of $M_{AP}$ across various training settings under offline and self-play preference learning scenarios.

\textbf{Limitations and future work}. 
Despite the performance improvements, $M_{AP}$ requires tuning a parameter $\beta$ to combine the explicit and implicit margins; future work could explore how to set this ratio automatically.
Additionally, while our experiments focus on the widely applied DPO and SimPO objectives, a broader investigation with alternative preference learning methods is crucial in future works.


\section*{Impact Statement}
This paper presents work whose goal is to advance the field of Machine Learning. There are many potential societal consequences of our work, none which we feel must be specifically highlighted here.

\bibliography{bibliography}
\bibliographystyle{icml2025}

\newpage
\appendix
\onecolumn

\section{Related Work}
\textbf{RLHF and Preference Learning Algorithms.}
Despite RLHF's effectiveness in aligning language models with human preferences \citep{rlhf-ziegler2019finetune, rlhf-stiennon2020learning, rlhf-bai2022training, rlhf-ouyang2022training}, the multi-stage RL training makes it computationally complex and hard to optimize \citep{rlhf-complex-ms,rlhf-complex-fdu}. 
Researchers have been exploring more efficient and simplified alignment algorithms, by simplifying the RL training process \citep{RAFT, RRHF} or utilizing only the offline preference dataset, with DPO \citep{DPO} being a notable example.
In addition to DPO, various offline preference learning objectives have been proposed, such as IPO \citep{IPO}, KTO \citep{KTO}, ORPO \citep{ORPO}, and SimPO \citep{SimPO}.
Moreover, these offline preference optimization methods have been extended to iterative settings,
with new preference pairs continuously sampled or reference policy updated using models trained in previous iteration \citep{cringe,selfreward,DNO,icml24bridging}.

\textbf{Data Quality in Alignment.}
The importance of data quality in alignment processes has been well-documented, both in RLHF \citep{webgpt,rlhf-bai2022training} and offline preference learning methods.
A significant body of research focuses on the distribution for response sampling during preference dataset construction \citep{iclr24statistical, icml24onpolicydata, icml24bridging}, while another line of work examines the quality of different preference pairs within the preference dataset \citep{wu2024betadpo, filter_dpo, curyy_dpo, BPO}. 
Of particular relevance to this study are efforts to explicitly define and utilize data quality metrics, such as leveraging the explicit reward margin to select high-quality data \citep{rs_dpo, eva} or reweighting loss \citep{reward_diff_dpo}, and using the implicit reward margin to prioritize training data \citep{active_pref_learn, a_recipe} or calibrating loss functions \citep{cringe, cal_dpo}. 
Despite the demonstrated effectiveness of these metrics, their often conflicting properties (as shown in Figure \ref{fig:teaser_a}) necessitate the development of a more universal data quality metric, which motivates us to propose the \methodname{} metric.\\
Additionally, current works based on implicit reward margin are limited to iterative preference learning settings \citep{active_pref_learn, a_recipe}, requiring the model $\pi_\theta$ to be different from $\pi_\mathrm{ref}$ to avoid constant-zero implicit reward $\hat r_\theta$. 
To overcome such a shortcoming, we propose to utilize the SimPO-based implicit reward $\hat r_\theta^\mathrm{Sim}$,
resulting in a new version of implicit reward margin applicable for standard offline preference learning scenarios.

\textbf{Self Play Alignment and Prompt Synthesis.}
The paired comparison nature of preference learning has inspired a range of self-play methods based on two-player games \citep{NLHF, SPIN, DNO, SPPO}, with both players being LMs generating responses to given prompts. Diverse from these works, eva \citep{eva} proposes an asymmetric alignment game involving a prompt creator and a response solver to augment preference data with higher reward differences (\aka informativeness in eva).
To generate new prompts, researchers have developed various prompt synthesis methods such as SelfInstruct \citep{self_instruct}, EvolQuality \citep{EvolQuality_Complexity}, EvolInstruct \citep{EvolInstuct}, Magpie \citep{magpie} and so on\footnote{Our contribution is orthogonal to different prompt synthesis techniques, and we employ EvolInstruct to generate new prompts following eva.}.
Our work adopts the asymmetric self-play framework from eva to generate additional preference data, with a focus on evaluating various data quality metrics within this evolving data context.

\section{Additional Experiments}
\subsection{Ablations study for Data Metrics Experiments}\label{appendix:metric_ablation}

For the ablation study concerning different reward models, SimPO provides an additional preference dataset generated using the Llama-3-8b-instruct model and annotated with the PairRM reward model, available at \href{https://huggingface.co/datasets/princeton-nlp/llama3-ultrafeedback}{llama3-ultrafeedback}. 
Given such convenience, we directly utilize this dataset and select top-40\% subsets with different metrics for subsequent SimPO training.
The performance of trained models with different metrics is depicted in Figure~\ref{fig:metric_ablation_rm}.
Consistent with the results discussed in \S~\ref{sec:metric_exp}, our proposed \methodname{} metric $M_{AP}$ still achieves the highest performance across all benchmarks under varying reward models, demonstrating the efficacy and robustness of our metrics.

For the ablation study focused on different training objectives, we opt for DPO loss as a substitute for SimPO and employ the Gemma-2-9b-it model, noted for its high performance. 
We again apply various metrics to select top-40\% data for training.
The results, depicted in Figure~\ref{fig:metric_ablation_dpo}, showcase that our proposed metric consistently outperforms existing baselines with notable improvements.
These consistent performance gains certify the $M_{AP}$ metric's ability to identify high-quality data for alignment training, demonstrating stable improvements for varying training objectives.

\begin{figure*}[ht]
    \centering
    \begin{subfigure}{0.47\textwidth}
        \centering
        \includegraphics[width=\linewidth]{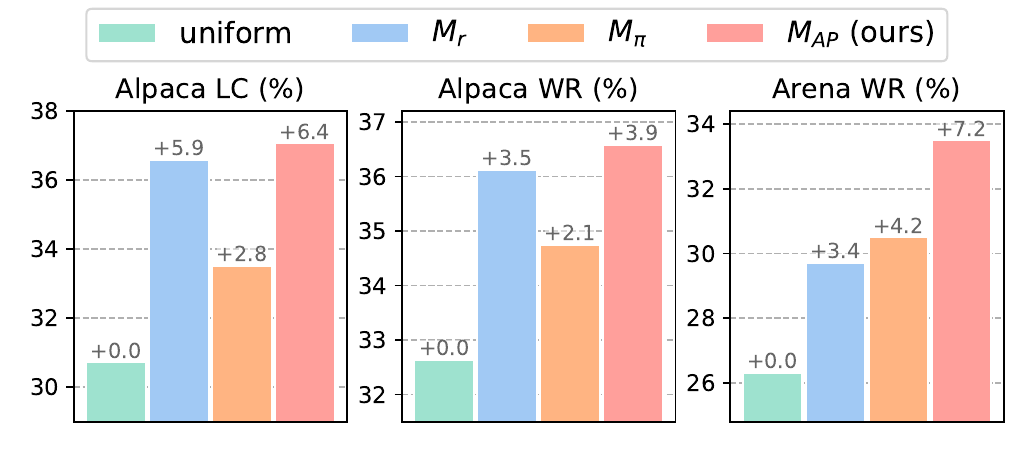}
        \caption{Ablation on different reward models}
        \label{fig:metric_ablation_rm}
    \end{subfigure}%
    \begin{subfigure}{0.47\textwidth}
        \centering
        \includegraphics[width=\linewidth]{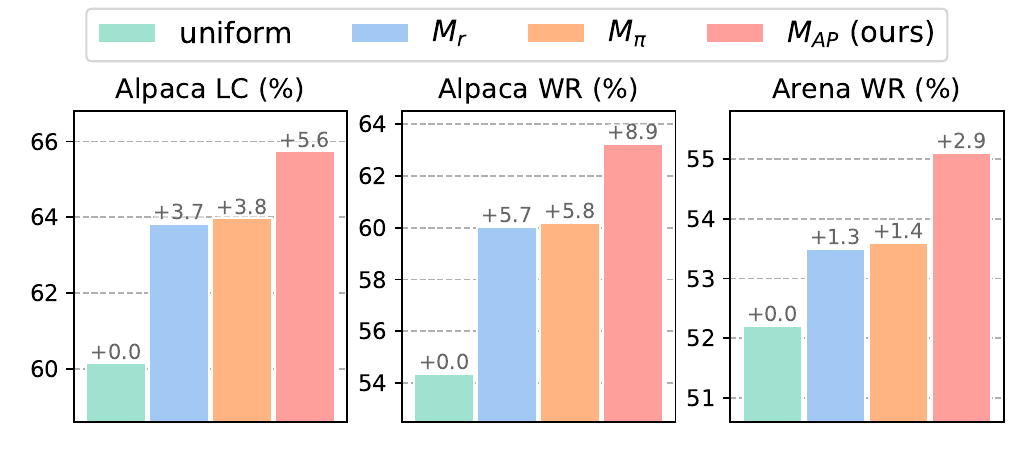}
        \caption{Ablation on different training objectives}
        \label{fig:metric_ablation_dpo}
    \end{subfigure}
    \caption{\textbf{Ablation study for data metrics.} (\ref{fig:metric_ablation_rm}) Performance of Llama-3-8b-instruct model trained with SimPO loss on preference pairs selected by different metrics. The preference dataset is based on the PariRM reward model, instead of the ArmoRM model considered in other settings. 
    (\ref{fig:metric_ablation_dpo}) Performance of Gemma-2-9b-it model trained with DPO loss on preference pairs selected by different metrics.}
    \label{fig:metric_ablation}
\end{figure*}

\subsection{More Comparison with EVA}\label{appendix:compare_eva}


\begin{algorithm}
\caption{Eva with \textit{Select-then-Evolve} Pipeline}
\label{algo:eva_orig}
\begin{algorithmic}
\STATE \textbf{Input:} SFT model $\pi_{\mathrm{SFT}}$, reward model $r$, existing prompts $\mathcal{X}$ and data quality metric $M$.
\STATE Initialize $\pi_{\theta_0}$ with $\pi_{\mathrm{SFT}}$.
\FOR{iteration $t=1,\ldots,T$}
  \STATE Generate initial preference dataset: $\mathcal{D}\leftarrow \mathrm{GenDataset}(\mathcal{X}, \pi_{\theta_{t-1}},r)$.
  \STATE Select top-$k$ dataset using $M$ metric:\\
  \hspace{1em} $\mathcal{D}_k = \{(x,y_w,y_l) \in \mathcal D| M(x,y_w,y_l;\theta_{t-1}) \ge \tau_k\}$, with $\tau_k$ being the $k$-th largest $M$ score in $\mathcal D$.
  \STATE Evolve prompts: $\mathcal{X}_k' \leftarrow \mathrm{EvolInstruct}(\mathcal{X}_k)$, where $\mathcal X_k$ denotes the prompts of $\mathcal D_k$ dataset.
  \STATE Generate additional preference dataset: $\mathcal{D}_k'\leftarrow \mathrm{GenDataset}(\mathcal{X}_k', \pi_{\theta_{t-1}},r)$.
  \STATE Conduct preference optimization on $\mathcal{D} \cup \mathcal{D}_k'$: $\theta_t \leftarrow \theta_{t-1} - \eta\nabla_\theta\mathcal L(\theta_{t-1};\mathcal{D} \cup \mathcal{D}_k').$
\ENDFOR
\STATE \textbf{Return:} optimized policy $\pi_{\theta_T}$
\end{algorithmic}
\end{algorithm}

\newlength{\origtextsep}
\setlength{\origtextsep}{\intextsep}
\setlength{\intextsep}{0pt}
\begin{wrapfigure}{r}{0.5\textwidth}
    \centering
    \includegraphics[width=0.9\linewidth]{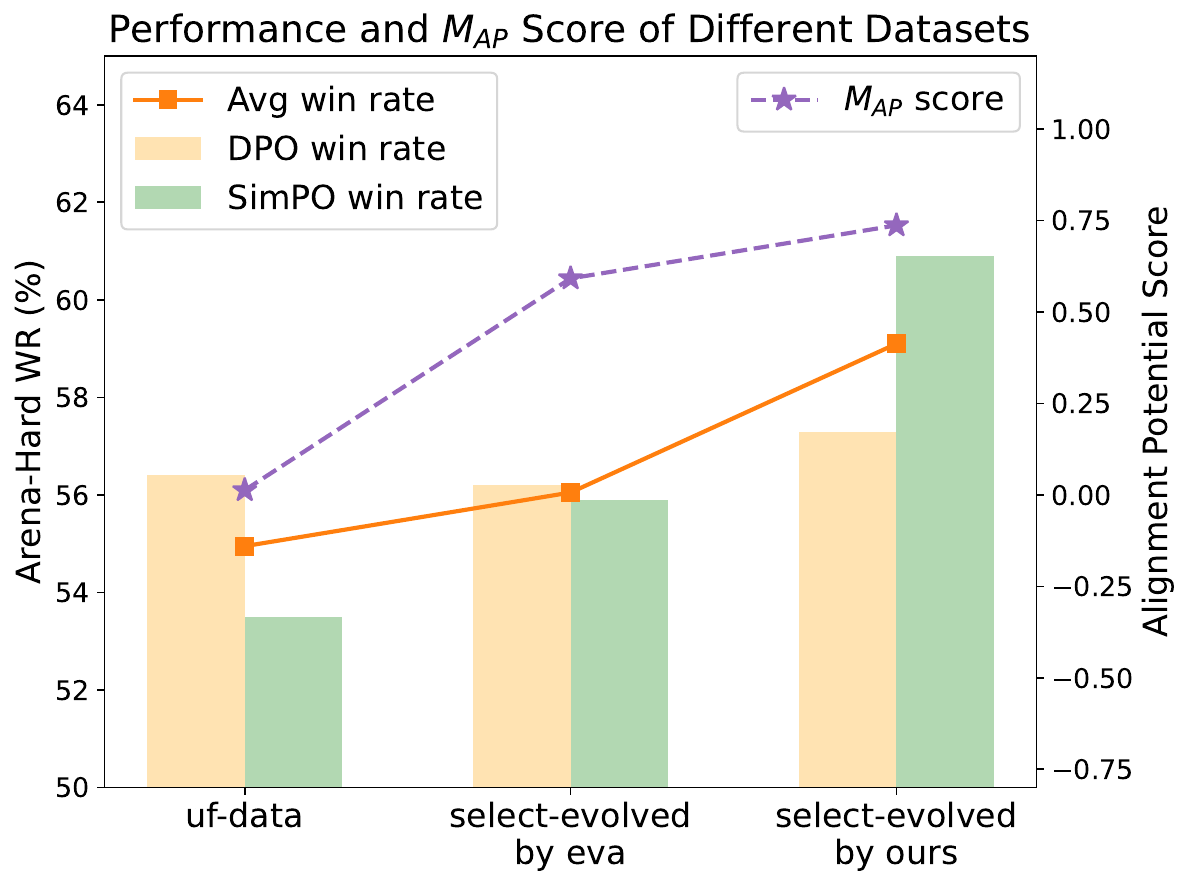}
    \vspace{-5pt}
    \caption{The \methodname{} scores of different datasets and the corresponding Gemma models' performance. 
    The dataset select-evolved by our methods shows the highest \methodname{} scores and produces the best-performing results.}
    \label{fig:perf_potential_ablation}
\end{wrapfigure}

Our main study employs the \textit{evolve-then-select} pipeline for preference data generation, as detailed in Algorithm~\ref{algo:eva}, where we streamline the dataset generation process using the $\mathrm{GenDataset}(\mathcal X, \pi_\theta, r)$ function.
This function takes the prompts $\mathcal X$, samples responses with LLM $\pi_\theta$, and annotates preferences with reward model $r$ to construct a preference dataset.
The \textit{evolve-then-select} pipeline ensures the final dataset $\mathcal D_k'$ aligns with the evaluation of data quality metrics.

In contrast, the original eva framework operates under a \textit{select-then-evolve} pipeline\footnote{The original eva paper employs $M_r$ scores as sampling weights for $\mathcal D_k$ selection but lacks implementation guidance. Thus we substitute this sampling with a similar top-$k$ selection.}, as detailed in Algorithm~\ref{algo:eva_orig}.
Within this pipeline, data selection precedes the evolving data process.
Therefore the data quality of the final evolved dataset $\mathcal D_k'$ may not be consistent with that of $\mathcal D_k$ due to the intricate prompt evolving and preference data generation process in data evolving.
Nevertheless, we still include comparisons under this \textit{select-then-evolve} setting to ensure a comprehensive and equitable evaluation against eva.

Using the experimental setup detailed in \S\ref{sec:main_exp}, but reversing the selection and evolution order, we assess models trained on various datasets as reported in Table~\ref{table:main_exp_select_first}. 
The results indicate that datasets select-evolved with our metric continue to deliver better overall performance than the supplementary UltraFeedback datasets and those derived using the eva method. 
To corroborate that these performance improvements stem from the ability to select high-quality data, we present the average performance (Arena Hard win rate) of Gemma models trained with either DPO or SimPO, alongside the \methodname{} scores of the respective datasets, in Figure~\ref{fig:perf_potential_ablation}. 
As illustrated, the model performance continually improves as the dataset's \methodname{} score increases, with the dataset select-evolved by our metric exhibiting both the highest \methodname{} score and performance.
This correlation between \methodname{} score and performance further underscores our metric's capability of discerning high-quality data for training.

\setlength{\intextsep}{\origtextsep}
\begin{table}[htbp]
\caption{Main experiments under the select-then-evolve self-play setting}
\label{table:main_exp_select_first}
\centering
\begin{tabular}{@{}lcccccc@{}}
\toprule
\multirow{3}{*}{\textbf{Method}} & \multicolumn{3}{c}{\textbf{Llama-3-8b-instruct}} & \multicolumn{3}{c}{\textbf{Gemma-2-9b-it}} \\ \cmidrule(lr){2-4} \cmidrule(lr){5-7} 
 & \textbf{AH} & \multicolumn{2}{c}{\textbf{AE 2.0}} & \textbf{AH} & \multicolumn{2}{c}{\textbf{AE 2.0}} \\ \cmidrule(l){2-2} \cmidrule(lr){3-4} \cmidrule(lr){5-5} \cmidrule(lr){6-7} 
 & \textbf{WR} & \textbf{LC} & \textbf{WR} & \textbf{WR} & \textbf{LC} & \textbf{WR} \\ \midrule
SFT & 21.4 & 23.25 & 23.50 & 40.7 & 48.75 & 36.49 \\ \midrule
$\text{DPO}_\text{uf-10k}$ & 23.6 & 29.80 & 28.32 & 46.0 & 54.00 & 46.11 \\
+uf-10k & {\ul 35.0} & \textbf{42.67} & 41.67 & {\ul 56.4} & \textbf{62.56} & 62.17 \\
+eva-10k & {\ul 35.0} & 40.38 & {\ul 42.10} & 56.2 & 61.73 & \textbf{62.94} \\
\textbf{+ours-10k} & \textbf{35.5} & {\ul 41.98} & \textbf{44.29} & \textbf{57.3} & {\ul 62.07} & {\ul 62.42} \\ \midrule
$\text{SimPO}_\text{uf-10k}$ & 24.2 & 28.45 & 26.23 & 45.2 & 56.90 & 43.80 \\
+uf-10k & 28.5 & 35.75 & 32.82 & 53.5 & \textbf{66.43} & {\ul 64.55} \\
+eva-10k & {\ul 32.7} & {\ul 40.78} & {\ul 38.68} & {\ul 55.9} & 64.96 & 62.88 \\
\textbf{+ours-10k} & \textbf{33.3} & \textbf{42.75} & \textbf{41.72} & \textbf{60.9} & {\ul 65.29} & \textbf{65.47} \\ \bottomrule
\end{tabular}
\end{table}

\section{Convergence Rates of DPO with Different Samplers}
\subsection{Proof of Theorem~\ref{thm:convergence}}\label{appendix:proof}
Here we restate the DPO loss in its population form:
\begin{align*}
\mathbb E_{(x,y,y')\sim s}\left[\mathcal L(x,y,y';\theta)\right] &= \mathbb E_{(x,y,y')\sim s}\left[-p(y\succ y')\log\sigma(\hat r_\theta(y) - \hat r_\theta(y'))\right]\\
&= \mathbb E_{(x,y,y')\sim s}\left[-\sigma(r(x,y) - r(x,y'))\log\sigma(\hat r_\theta(y) - \hat r_\theta(y'))\right].
\end{align*}
Under stochastic gradient descent, a data $(x^t,y^t,y'^t)$ is sampled to update the parameter $\theta^t$ at each iteration $t$ with the empirical loss.
Since the sampled context $x^t$ and arms $y^t,y'^t$ can be utilized to compute both $\mathcal L(x^t,y^t,y'^t;\theta^t)$ and $\mathcal L(x^t,y'^t,y^t;\theta^t)$, we utilize both terms to derive the stochastic update rule, like previous works \citep{IPO,DNO}:
\begin{equation}\label{eqn:stoc_DPO}
    \theta^{t+1} = \theta^t -\frac12\eta^t\nabla_{\theta}\left[\mathcal L(x^t,y^t,y'^t;\theta^t) + \mathcal L(x^t,y'^t,y^t;\theta^t)\right]
\end{equation}
where $\eta^t$ denotes the time-varying learning rate.
Without loss of generality, we assume a uniform reference policy $\pi_\mathrm{ref}(y|x) = \frac1{|\mathcal Y|}$ like \citet{shi2024crucial}.
Therefore $\hat r_\theta(x,y) = \beta\log\frac{\pi_\theta(y|x)}{1/|\mathcal Y|} = \beta\theta(x,y) + \log Z(x)$ and the optimum $\theta^*$, characterized by $r(x,y) = \hat r_{\theta^*}(x,y) + \log Z^*(x)$, turns into $\beta\theta^*(x,y) = r(x,y) + c(x)$, where $c(x)$ is a constant independent of $y$.
The proof of Theorem~\ref{thm:convergence} is as follows:
\begin{proof}[Proof of Theorem~\ref{thm:convergence}]
For ease of annotation, we first consider the single context setting $|\mathcal X| = 1$ so that we can denote $\theta_y = \theta(x, y), r_y = r(x, y)$ with $x$ omitted for simplicity.
We first investigate the stochastic update rule of $\theta^t$ given by Equation~\eqref{eqn:stoc_DPO}.
Let the arms chosen at iteration $t$ being $y,y'$, we have:
\begin{align*}
    \theta_y^{t+1} &= \theta_y^t + \frac{\eta^t\beta}{2}\left[\sigma(r_y-r_{y'})\sigma(\beta\theta_{y'}^t-\beta\theta_y^t) - \sigma(r_{y'}-r_{y})\sigma(\beta\theta_{y}^t -\beta\theta_{y'}^t)\right]\\
    &= \theta_y^t + \frac{\eta^t\beta}{2}\left[\sigma(r_y-r_{y'}) - \sigma(\beta\theta_y^t - \beta\theta_{y'}^t)\right]\\
    \theta_{y'}^{t+1} &= \theta_{y'}^t +  \frac{\eta^t\beta}{2}\left[\sigma(r_{y'}-r_{y}) - \sigma(\beta\theta_{y'}^t- \beta\theta_{y}^t)\right]\\
    &= \theta_{y'}^t - \frac{\eta^t\beta}{2}\left[\sigma(r_y-r_{y'}) - \sigma(\beta\theta_y^t - \beta\theta_{y'}^t)\right],
\end{align*}
where the second equality holds by the fact that $\sigma(x)\sigma(-y) - \sigma(-x)\sigma(y) = \sigma(x) - \sigma(y).$

Since the optimal target is $\beta\theta^* = r +c$,  where $c$ is a constant, 
we denote a new vector $\xi^t = \beta\theta^t - r$.
Let:
\begin{gather*}
\Delta_{yy'}^t = \sigma(r_y-r_{y'}) - \sigma(\beta\theta_y^t - \beta\theta_{y'}^t).
\end{gather*}
The update rule can be rewritten as:
\begin{equation}\label{eqn:dpo_update}
\begin{gathered}
     \theta_y^{t+1} = \theta_y^t + \frac{\eta^t\beta}{2}\Delta_{yy'}^t, 
     \theta_{y'}^{t+1} = \theta_{y'}^t - \frac{\eta^t\beta}{2}\Delta_{yy'}^t.\\
     \xi_y^{t+1} = \xi_y^t + \frac{\eta^t\beta^2}{2}\Delta_{yy'}^t, 
     \xi_{y'}^{t+1} = \xi_{y'}^t - \frac{\eta^t\beta^2}{2}\Delta_{yy'}^t.\\
\end{gathered}
\end{equation}
Note that $\theta^{t+1}_y + \theta_{y'}^{t+1} = \theta_y^t + \theta_{y'}^t$ and $\xi_{y}^{t+1} + \xi_{y'}^{t+1} = \xi_{y}^{t} + \xi_{y'}^{t}$, which indicates the mean of $\theta$ and $\xi$ remains unchanged during optimization.

For optimal $\theta^*$, the $\xi^* = \beta\theta^*-r$ should be a constant vector $c\cdot\mathbf{1}.$ Therefore, we can measure the variance of $\xi^t$ as an indicator of convergence:
$$
V^t = \frac{1}{|\mathcal Y|}\sum_{y\in\mathcal Y} (\xi_y^t - \bar\xi^t)^2, \mathrm{where}\ \bar\xi^t = \frac{1}{|\mathcal Y|}\sum_{y\in\mathcal Y}\xi_y^t.
$$
Using $V^t$, the distance function $\mathrm{Dist}(\theta^t,\theta^*)$ satisfies that:
$$
\mathrm{Dist}(\theta^t,\theta^*)^2 =\frac{1}{|\mathcal Y|^2}\sum_{y,y'\in\mathcal Y} M_1(x,y,y'|\theta,r)^2 = \frac{1}{|\mathcal Y|^2}\sum_{y,y'\in\mathcal Y} (\xi_y^t - \xi_{y'}^t)^2 = 2V^t.
$$

Now we examine how $V^t$ changes at each iteration. Assume the sampled pairs are $y,y'$ at iteration $t$, then we have:
\begin{align*}
    V^{t+1} &= V^{t} + \frac{1}{|\mathcal Y|}\left[(\xi_y^{t+1}-\bar\xi^{t+1})^2 + (\xi_{y'}^{t+1}-\bar\xi^{t+1})^2 - (\xi_y^{t}-\bar\xi^{t})^2 - (\xi_{y'}^{t}-\bar\xi^{t})^2\right]\\
    &= V^t + \frac{1}{|\mathcal Y|}\left[(\xi_y^{t+1})^2 + (\xi_{y'}^{t+1})^2-(\xi_y^{t})^2-(\xi_{y'}^{t})^2\right]\\
    &= V^t + \frac{1}{|\mathcal Y|}\left[(\xi_y^{t} + a\Delta_{yy'}^t)^2 + (\xi_{y'}^{t} - a\Delta_{yy'}^t)^2-(\xi_y^{t})^2-(\xi_{y'}^{t})^2\right]\quad \text{(let $a=\frac{\eta^t\beta^2}{2}$)}\\
    &= V^t + \frac{2a}{|\mathcal Y|}(\xi_y^t - \xi_{y'}^t + a\Delta_{yy'}^t)\Delta_{yy'}^t.
\end{align*}
where the second equation holds with $\bar \xi^{t+1} = \bar \xi^t$ and $\xi_y^{t+1} + \xi_{y'}^{t+1} = \xi_y^{t} + \xi_{y'}^{t}.$

Using the mean value theorem, we have:
$$
\Delta_{yy'}^t = \sigma'(\lambda_{yy'}^t)\left[(r_y-r_{y'}) - (\beta\theta_y-\beta\theta_{y'})\right]= -\sigma'(\lambda_{yy'}^t)(\xi_y^t - \xi_{y'}^t),
$$
where $\lambda_{yy'}^t$ is between $r_y - r_{y'}$ and $\beta\theta_y^t - \beta\theta_{y'}$. Note that $\sigma'(\cdot) \in (0,\frac14]$, and the value of $\sigma'(\lambda)$ can be further lower bounded using the update rule.

The change in $V^t$ can thus be written as:
$$
\Delta V^t = V^{t+1} - V^t = -\frac{2(1-a\sigma'(\lambda_{yy'}^t))a\sigma'(\lambda_{yy'}^t)}{|\mathcal Y|}(\xi_y^t - \xi_{y'}^t)^2
$$
Under the optimal learning rate $\eta^t = \frac{1}{\beta^2\sigma'(\lambda_{yy'}^t)}$ (we can adjust the learning rate at each iteration, similar to the line search methods), such that $a\sigma'(\lambda_{yy'}^t)= \frac12$, the change in $V^t$ satisfies:
\begin{equation}\label{eqn:error_update}
\Delta V^t = - \frac{1}{2|\mathcal Y|}(\xi_y^t - \xi_{y'}^t)^2.
\end{equation}


Now we can examine the impact of different samplers, which determine the choices of $y,y'$ at each iteration.
For the uniform sampler, under the optimal learning rate $\eta^t$, we have:
\begin{align*}
    \mathbb E[V^{t+1}|V^t] &= V^t + \mathbb E[\Delta V^t]\\
    &= V^t - \frac{1}{2|\mathcal Y|}\mathbb E_{y,y'\sim s_u}[(\xi_y^t - \xi_{y'}^t)^2]\\
    &= V^t - \frac{1}{2|\mathcal Y|}\cdot\frac{1}{|\mathcal Y|^2}\sum_{y\in\mathcal Y}\sum_{y'\in\mathcal Y}(\xi_y^t - \xi_{y'}^t)^2\\
    &=V^t - \frac{1}{2|\mathcal Y|}\cdot2V^t\\
    &=(1-\frac1{|\mathcal Y|})V^t.
\end{align*}
So we have the following \textbf{convergence result for the uniform sampler}:
\begin{equation}\label{eqn:uniform_rate}
\mathbb E[V^t] = (1-\frac{1}{|\mathcal Y|})^tV^0.
\end{equation}

In contrast, the adversarial sampler, which chooses the pair $y^*,y'^* $ to maximize the $M_1$ score:
\begin{align*}
    y*,y'^* &= \argmax_{y,y'\in \mathcal Y} M_1(x,y, y') \\
    &= \argmax_{y,y'\in \mathcal Y} |r_y -  r_{y'} - (\beta\theta_y^t - \beta\theta_{y'}^t)|\\
    &= \argmax_{y,y'\in \mathcal Y} |\xi_y^t - \xi_{y'}^t|,
\end{align*}
which means to select the maximal and minimal elements from $\xi^t.$
Given this definition, the selected pairs at each iteration $y^*,y'^* $ satisfy that:
$$
(\xi_{y^*}^t - \xi_y^t)\cdot(\xi_{y'^*}^t - \xi_{y}^t) \le 0, \forall y\in \mathcal Y.
$$
Let $m^t = \xi_{y^*}^t - \bar \xi^t, M^t= \xi_{y'^*}^t - \bar \xi^t$ and $s_y^t = \xi_y^t - \bar \xi^t$, the inequality above can be rewritten as:
$$
(m^t-s_y^t)(M^t-s_y^t) \le 0, \forall y\in \mathcal Y.
$$
Summarize over all $y\in\mathcal Y$, we have:
\begin{align*}
    0 &\ge \sum_{y\in\mathcal Y} (m^t - s_y^t)(M^t - s_y^t)\\
    &= |\mathcal Y|\cdot m^tM^t - (m^t+M^t)\sum_{y\in\mathcal Y}s_y^t + \sum_{y\in\mathcal Y}(s_y^t)^2\\
    &= |\mathcal Y|\cdot m^tM^t + \sum_{y\in\mathcal Y}(s_y^t)^2\\
    &=|\mathcal Y|\cdot\left[(\xi_{y^*}^t - \bar \xi^t)(\xi_{y'^*}^t - \bar \xi^t) + V^t\right].
\end{align*}
where the second equality (line 2 to line 3) holds by $\sum_{y\in \mathcal Y}s_y^t = 0$.
Combing with the fact that $(a-b)^2\ge -4ab$, we have:
\begin{align*}
    (\xi_{y^*}^t - \xi_{y'^*}^t)^2 &\ge -4(\xi_{y^*}^t - \bar \xi^t)(\xi_{y'^*}^t - \bar \xi^t) \\
    &\ge 4V^t.
\end{align*}
Therefore, when updating using the selected pair  $y^*,y'^* $ at iteration $t$, we have (under the optimal learning rate in Equation~\eqref{eqn:error_update}):
\begin{align*}
V^{t+1} &= V^t  - \frac{1}{2|\mathcal Y|}(\xi_{y^*}^t - \xi_{y'^*}^t)^2\\
&\le V^t - \frac{2}{|\mathcal Y|}V^t\\
&= (1-\frac{2}{|\mathcal Y|})V^t.
\end{align*}
So we have the following \textbf{convergence result for the adversarial sampler}:
\begin{equation}\label{eqn:adv_rate}
V^t \le (1-\frac{2}{|\mathcal{Y}|})^tV^0,
\end{equation}

Eventually, we can compare the iterations required to converge to the same level of error, when applying different samplers.
Since $\mathrm{Dist}(\theta^t,\theta^*)^2 = 2V^t$, to reach $\mathrm{Dist}(\theta^t,\theta^*) = \varepsilon \mathrm{Dist}(\theta^0,\theta^*)$ for some small $\varepsilon < 1$, or equivalently, $V^t = \varepsilon^2 V^0$, the uniform sampler would require $T_u(\varepsilon) = \frac{2\log(\varepsilon)}{\log(1 - 1/|\mathcal Y|)}$ iterations according to Equation~\eqref{eqn:uniform_rate}.
In comparison, the adversarial sampler would require $T_{adv}(\varepsilon) = \frac{2\log(\varepsilon)}{\log(1 - 2/|\mathcal Y|)}$ iterations by Equation~\eqref{eqn:adv_rate}.
As one may verify that for $|\mathcal Y| > 2$:
$$
\log(1-\frac{2}{|\mathcal Y|}) > 2\log(1-\frac{1}{|\mathcal Y|}),
$$
therefore \textbf{the adversarial sampler requires less than half the iterations of the uniform counterpart}:
\begin{equation}\label{eqn:iter_ratio}
    T_{adv}(\varepsilon) < 0.5 T_u(\varepsilon).
\end{equation}

Furthermore, the above results can directly generalize to the contextual setting where $|\mathcal X| > 1$, which we briefly discuss here: With slight abuse of annotations, we can re-define $V^t = \frac{1}{|\mathcal X||\mathcal Y|}\sum_{x\in\mathcal X}\sum_{y\in\mathcal Y}(\xi_{x,y}^t - \bar \xi_x^t)^2$, and the update of $V^t$ with preference data $(x,y,y')$ at $t$ would be: 
$$
V^{t+1} = V^t + \frac{2a}{|\mathcal X||\mathcal Y|}(\xi_{xy}^t - \xi_{xy'}^t+a\Delta_{xyy'}^t)\Delta_{xyy'}^t.
$$
Then the convergence results immediately become $\mathbb E[V^t] = (1-\frac{1}{|\mathcal X||\mathcal Y|})^tV^0$ for uniform sampler and $V^t \le (1-\frac{2}{|\mathcal X||\mathcal Y|})V^0$ for the adversarial sampler, leading to the same result with Equation~\eqref{eqn:iter_ratio}.
\end{proof}

\subsection{Numerical Experiments}
Theorem~\ref{thm:convergence} establishes that the adversarial sampler requires less than half the iterations of the uniform sampler to reach the same error level. Here we conduct numerical experiments to validate this result.

We examine two scenarios: the standard bandit setting with a single context (\ie $|\mathcal X| = 1$) and a contextual setting with $|\mathcal X| = 5$. In both scenarios, the arm space is set to $|\mathcal Y| = 10$.
Rewards are initialized uniformly using $U[0,1]$ and the initial parameter $\theta^0_x$ is set to $0$.
We consider a uniform reference policy, with the KL-parameter $\beta$ fixed at $0.1$.
For the learning rate, while our proof uses the optimal learning rate $\eta^t = \frac{1}{\beta^2\sigma'(\lambda_{yy'}^t)}\ge \frac4{\beta^2}$, a fixed learning rate is set to $4/\beta^2$ for simplicity.

Figure~\ref{fig:converge_numerical} presents the error, \ie the distance to optimum $\mathrm{Dist}(\theta,\theta^*)$, averaged across 10 random initializations. As depicted, while both samplers achieve linear convergence, the adversarial sampler converges significantly more rapidly due to its strategic selection of the data with the most potential for alignment.
Notably, in our experiments, the uniform sampler required approximately six times more iterations to reach the same level of error as the adversarial sampler, which corroborates the theoretical prediction of a speed-up by a factor of more than two.

\begin{figure*}[ht]
    \centering
    \begin{subfigure}{0.4\textwidth}
        \centering
        \includegraphics[width=\linewidth]{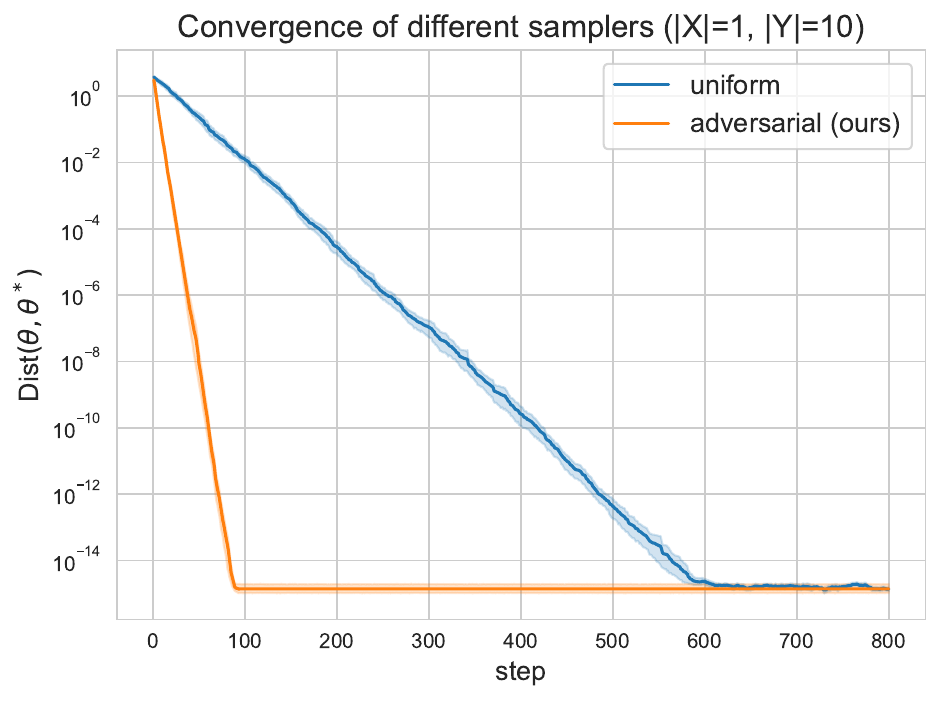}
        \caption{Convergence results for $|\mathcal X|=1, |\mathcal Y|=10$}
        \label{fig:converge}
    \end{subfigure}%
    \begin{subfigure}{0.4\textwidth}
        \centering
        \includegraphics[width=\linewidth]{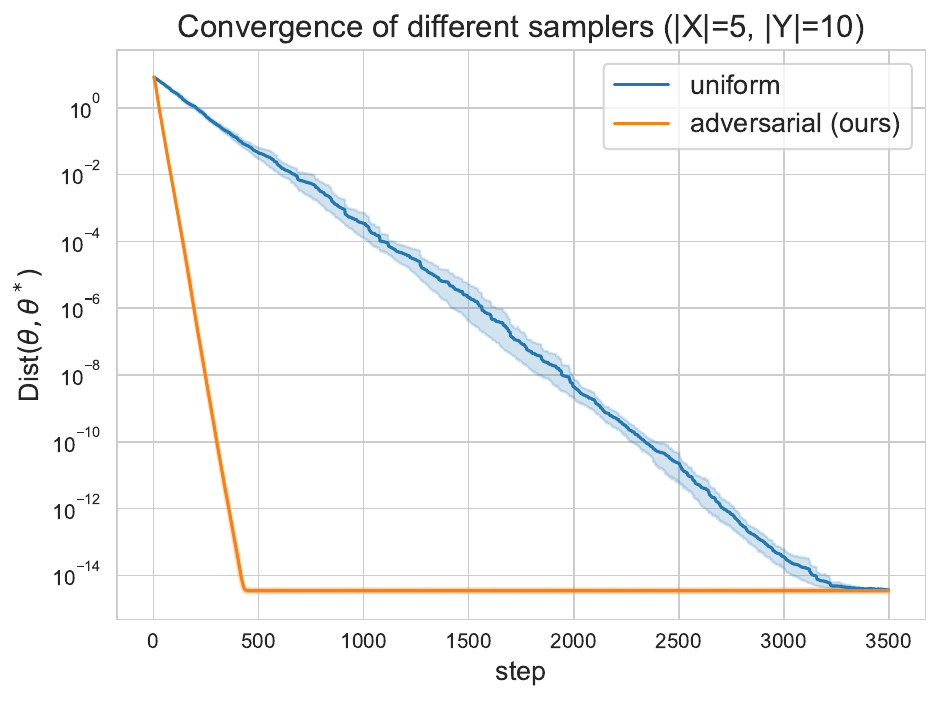}
        \caption{Convergence results for $|\mathcal X|=5, |\mathcal Y|=10$}
        \label{fig:converge_contextual}
    \end{subfigure}
    \caption{\textbf{Numerical experiments for DPO under contextual bandit setting.} This figure compares learning convergence using two samplers: a uniform sampler and an adversarial sampler that selects pair data to maximize our $M_1$ metric.
    The plots display the error $\mathrm{Dist}(\theta,\theta^*)$ across 10 random initializations. Figure~\ref{fig:converge} addresses the single context scenario, while Figure~\ref{fig:converge_contextual} pertains to multiple contexts. The error lower bound of approximately 1e-15 is attributed to floating-point precision limitations.}
    \label{fig:converge_numerical}
\end{figure*}

\section{Implementation Details}\label{appendix:implement}

\subsection{Codes \& Hyperparameters}
The code is available at: \href{https://github.com/Hesse73/Alignment-Potential-Metric}{https://github.com/Hesse73/Alignment-Potential-Metric}.

\textbf{Data quality metrics.} 
We mainly investigate three metrics in this paper: (1) the explicit reward margin metric: $M_r(x,y_w,y_l) = |r(x,y_w) - r(x,y_l)|$; (2) the implicit reward margin metric: $M_\pi(x,y_w,y_l)=|\hat r_\theta(x,y_w) - \hat r_\theta(x,y_l)|$; and (3) our proposed alignment potential metric: $M_{AP} = |r(x,y_w) - r(x,y_l)| - |\hat r_\theta(x,y_w) - \hat r_\theta(x,y_l)|$.
While explicit rewards $r(x,y)$ are directly given by a reward model, computing the implicit rewards $\hat r_\theta(x,y) = \beta\log\frac{\pi_\theta(y|x)}{\pi_\mathrm{ref}(y|x)}$ requires the LLM $\pi_\theta$, a reference policy $\pi_\mathrm{ref}$, and a parameter $\beta > 0$.
As discussed stated in \S\ref{sec:metric_exp}, since the implicit reward $\hat r_\theta(x,y)$ becomes constant-zero when $\pi_\theta$ is identical with $\pi_\mathrm{ref}$, we propose implementing the implicit reward via SimPO's length-normalized reward: $\hat r_\theta^\mathrm{Sim}(x,y) = \beta\log\pi_\theta(y|x)/|y|$, which can be seen as a special case of $\hat r_\theta$ with uniform reference model \citep{alphadpo}.
Regarding the parameter $\beta$, while it doesn't influence the ranking determined by $M_\pi$, it does affect the score produced by our $M_{AP}$ metric.
To effectively choose a value for $\beta$, let $\sigma_r$ and $\sigma_\pi$ denote the standard deviations of $|r(x,y_w)-r(x,y_l)|$ and $|{\log\pi_\theta(y_w|x)}/{|y_w|} - {\log\pi_\theta(y_l|x)}/{|y_l|}|$ across the dataset, respectively.
Utilizing these standard deviations, we rescale the two margins akin to Z-score normalization and introduce a parameter $\alpha\in\{0.25, 0.5, 1.0, 2.5, 5.0\}$ for computing the $M_{AP}$ metric:
$$
M_{AP}(x,y_w,y_l) = \frac{1}{\sigma_r}\left|r(x,y_w)-r(x,y_l)\right| - \frac{\alpha}{\sigma_\pi} \left|\frac{\log\pi_\theta(y_w|x)}{|y_w|} - \frac{\log\pi_\theta(y_l|x)}{|y_l|}\right|.
$$
Here we use $\alpha$ to distinguish it from the $\beta$ hyperparameter used in the DPO and SimPO loss function.

\begin{wraptable}{r}{0.5\textwidth}
\vspace{-\baselineskip}
\centering
\caption{Hyperparameters used in our preliminary experiments.}
\label{table:params_prelim}
\setlength{\tabcolsep}{1.3mm}{
\begin{tabular}{@{}lcccc@{}}
\toprule
\multirow{2}{*}{Hyperparameter} & \multicolumn{2}{c}{Gemma (ArmoRM)} & \multicolumn{2}{c}{Llama (SimPO)} \\ \cmidrule(lr){2-3} \cmidrule(lr){4-5} 
 & SimPO & DPO & ArmoRM & PairRM \\ \midrule
learning rate & 8e-7 & 5e-7 & 1e-6 & 1e-6 \\
$\beta$ & 10 & 0.01 & 10 & 2.5 \\
$\gamma/\beta$ & 0.5 & \textbackslash{} & 0.3 & 0.55 \\ \bottomrule
\end{tabular}}
\end{wraptable}
\textbf{Preliminary experiments.} 
The preliminary experiments, outlined in Section \ref{sec:metric_exp}, employ the three metrics to select the top 40\% subsets from existing preference datasets within SimPO. 
While the explicit rewards $r(x,y)$ are present in SimPO's datasets, they didn't record the probabilities of LLM $\pi_\theta(y|x)$ when sampling the responses.
So we use the corresponding models from huggingface: \href{https://huggingface.co/google/gemma-2-9b-it}{Gemma-2-9b-it} and \href{https://huggingface.co/meta-llama/Meta-Llama-3-8B-Instruct}{Llama-3-8b-instruct}, input the generated responses $y$ and prompts $y$, and compute $\pi_\theta(y|x)$ to derive implicit rewards $\log\pi_\theta(y|x)/|y|$.
We use three datasets for data selection: 
two Llama-based datasets annotated by ArmoRM: \href{https://huggingface.co/datasets/princeton-nlp/llama3-ultrafeedback-armorm}{llama3-ultrafeedback-armorm} and PairRM: \href{https://huggingface.co/datasets/princeton-nlp/llama3-ultrafeedback}{llama3-ultrafeedback},
and one Gemma-based dataset annotated by ArmoRM: 
\href{https://huggingface.co/datasets/princeton-nlp/gemma2-ultrafeedback-armorm}{gemma2-ultrafeedback-armorm}.
For the hyperparameter in $M_{AP}$, we set $\alpha = 1.0$ for Llama'
s PairRM annotated dataset and set $\alpha=2.5$ for the remaining two datasets.
After data selection,  we apply the tuned hyperparameters reported in SimPO's paper for both SimPO and DPO training.
Specifically, we set a batch size of 128, with a max sequence length of 2048 and a max prompt length of 1800. All the models are trained using a cosine learning rate scheduler with a 10\% warmup ratio, via AdamW optimizer for one epoch.
Table~\ref{table:params_prelim} details the other hyperparameters for different datasets and objectives

\textbf{Main experiments.} 
In the main experiments, we integrate our metric $M_{AP}$ into the evolving alignment (eva) framework, which involves iterative preference dataset generation and prompt evolving processes.
To generate the preference datasets $\mathcal D$ and $\mathcal D'$, we use top-p sampling (p=0.95) with a temperature of 0.8 and max sequence length of 4096 to sample responses $y$ for prompts $x$.
Following SimPO, we use five distinct random seeds to sample 5 responses $\{y_1,\dots,y_5\}$ for each prompt $x$, and use the responses with max/min reward $r(x,y)$, annotated via ArmoRM, to construct the preference pair $(x,y_w,y_l)$.
The previous sampling parameters are also used for prompt evolving, \ie query the model to rewrite existing prompts $x\in\mathcal X$ into evolved prompts $\mathcal X'$.
Regarding the instruction for LLMs to write prompts, we employ the five prompts in EvolInstruct \citep{EvolInstuct} and make some modifications for better instruction-following capabilities. 
Detailed descriptions of these revised prompts are available in Table~\ref{example:evol_prompts}.
Incorrectly formatted prompts per the evolving instruction are filtered out when constructing $\mathcal X'$.

As for the subsequent data selection and training, we use $\alpha=2.5$ for gemma and $\alpha=5.0$ for llama for our main results (Table~\ref{table:main_exp_evolve_first} and Table~\ref{table:main_exp_select_first}). For llama with DPO, we set a learning rate of 7e-7 and $\beta=0.01$, and other hyperparameters remain consistent with Table \ref{table:params_prelim}.
In the data-size scaling experiment (Figure \ref{fig:data_scaling}), the number of EvolInstruct prompt templates is reduced from 5 to 2, ensuring the processed prompt set is less than 30k$\times$2, maintaining a smaller prompt set than 60k.
The hyperparameter $\alpha=0.5$ and $\gamma/\beta = 0.3$ is employed for $M_{AP}$ metric and subsequent SimPO training. 
For the multi-iteration experiment (Figure~\ref{fig:iter_scaling}), we reduce the 3rd iteration's learning rate to 3e-7 to prevent overfitting. The $\gamma/\beta$ parameter is tuned within $\{0.1,0.3,0.5,0.8\}$, and the best-performing results based on the Arena Hard WR are reported.

\subsection{GPT-4 Annotation}
In \S\ref{sec:derive_AP_metric}, we use $M_+$ and $M_{AP}$ metrics to select the top-10\% subsets from SimPO's preference dataset and measure GPT-4's agreement on the preference annotation in data $(x,y_w,y_l)$.
Specifically, we input the prompt $x$ along with two responses $y_w,y_l$ for GPT-4, ask it to choose the better one, and check if it aligns with the preference data's judgment, \ie $y_w \succ y_l|x$.
For feasibility and financial consideration, 1,000 preference data from the selected top subsets is sampled for annotation. 
We employ the prompt template from Arena-hard \citep{arenahard} to prompt the GPT-4-1106-preview model for preference annotation. The prompts can be accessed via their \href{https://github.com/lmarena/arena-hard-auto/blob/main/config/judge_config.yaml}{config file}.

\begin{figure}[htbp]
    \centering
    \includegraphics[width=0.5\linewidth]{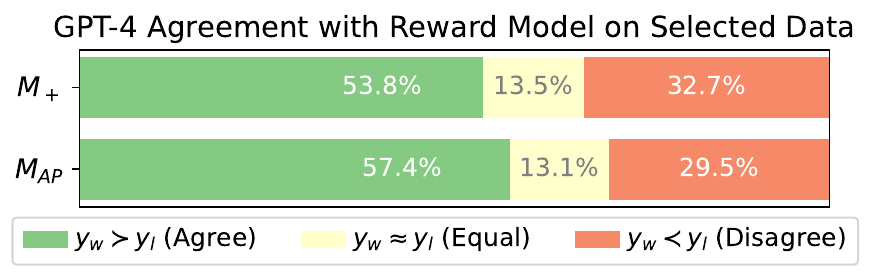}
    \caption{GPT-4 agreement with reward model on data selected by different metrics from \href{https://huggingface.co/datasets/princeton-nlp/llama3-ultrafeedback}{SimPO's dataset with Llama model}. The data selected by $M_{AP}$ has a higher agree rate than $M_+$, indicating less reward annotation noise.}
    \label{fig:llama_agree}
\end{figure}

In addition to the agreement results in Figure \ref{fig:gemma_agree}, which is selected and annotated on the Gemma dataset of SimPO, we also include the agreement results of the Llama dataset of SimPO in Figure~\ref{fig:llama_agree}.
Here data selected by $M_{AP}$ metric also results in a higher agree rate than $M_+$, underscoring the effectiveness of noise reduction strategies in $M_{AP}$.

\section{Illustrative Examples of Preference Data}\label{appendix:examples}

This section provides detailed examples of preference data utilized in this paper.

\textbf{Contradiction of existing metrics.}
Prior research indicates that prioritizing data with \textit{large explicit reward margins} or \textit{smaller implicit reward margins} can improve alignment performance. 
However, these two metrics often give conflicting guidance, demonstrated by two examples from the SimPO preference dataset (\href{https://huggingface.co/datasets/princeton-nlp/gemma2-ultrafeedback-armorm}{gemma2-ultrafeedback-armorm}) depicted in Figure~\ref{fig:teaser_a}:
The first example, where the data produces both large explicit and implicit reward margins, is rated as ``high-quality'' by the explicit reward margin but ``low-quality'' by the implicit reward margin.
Detailed information for this example is provided in Table~\ref{example:both_large}.
The second example, where the data produces both small explicit and implicit reward margins, is rated as ``low-quality'' by the explicit reward margin but ``high-quality'' by the implicit reward margin.
This data is elucidated in Table~\ref{example:both_small}.

Our proposed \methodname{} metric differentiates itself by classifying both examples as ``low-quality.'' This is due to the small disparity between explicit and implicit rewards, indicating that the LLM's preferences are already well-aligned with the data, thus negating the need for further training.

\textbf{Noisy reward model annotation.}
Reward models can introduce noisy preferences in annotation. An illustrative example from the \href{https://huggingface.co/datasets/princeton-nlp/gemma2-ultrafeedback-armorm}{gemma2-ultrafeedback-armorm} dataset is presented in Table~\ref{example:rm_wrong}, with its simplified version illustrated in Figure~\ref{fig:noise_example} in the main body of our paper.

As shown in the example, the prompt asks LLM about the name of Zulu soldiers and requires it to output ``No answer'' if the answer cannot be determined by the provided text.
Since the provided text does not state what Zulu soldiers are called, the correct answer would be ``No answer'', corresponding to the $y_l$ response.
However, the reward model predicts a lower reward for the correct answer and incorrectly assigns a higher reward for the wrong answer $y_w$.
This highlights the limitations of reward models and the resultant annotation noise.

Moreover, this preference data also implies that the implicit reward, given by the LLM itself, could provide correct preference annotations: the implicit reward $\hat r_\theta(x,y_l) = -3.4$ for the correct yet mislabeled response $y_l$, is much higher than the wrong answer $\hat r_\theta(x,y_w) = -8.9$.
This observation is supported by recent studies recognizing the LLM's capacity to annotate preferences \citep{boostrapping_implicit_r, CDR_annotation}. 

\begin{table}[t]
\caption{Prompts to evolve instructions, adapted from EvolInstruct \citep{EvolInstuct}.}
\label{example:evol_prompts}
\begin{tabularx}{\textwidth}{@{}p{0.21\linewidth}X@{}}
\toprule
\textbf{In-Depth Base Prompt} with an adaptive ``strategy'' identifier: \lstinline[style=mystyle]{<STRATEGY>}, which will be replaced by the following 4 strategies  & \lstinline[style=mystyle]{I want you act as a Prompt Rewriter.\\n 
Your objective is to rewrite a \{Given Prompt\} into a **more complex version** to make those famous AI systems (e.g., ChatGPT) a bit harder to handle.\\n 
**Requirements**:\\n
- The \{Rewritten Prompt\} cannot omit the input in the \{Given Prompt\}. \\n 
- You SHOULD complicate the given prompt using the following method: \\n
<STRATEGY> \\n
- The \{Rewritten Prompt\} must be reasonable and must be understood and responded to by humans.\\n
**Constraints that must be followed**:\\n
- The \{Rewritten Prompt\} can only add 10 to 20 words into the \{Given Prompt\}. \\n
- The \{Rewritten Prompt\} should be self-contained, with **all necessary information** provided, so that it can be responded to without needing to refer back to the \{Given Prompt\}.\\n
- Your response should contain **only** the \{Rewritten Prompt\}, **without any** additional formatting or introductory phrases such as 'Here is the rewritten prompt:' or 'The rewritten prompt is:'.\\n
The \{Given Prompt\}: \\n<PROMPT>\\n
========\\nBased on the prompt above, rewrite a prompt:\\n\{Rewritten Prompt\}:\\n}  \\\midrule
\textbf{-- Adding Constraints} & The strategy is: \lstinline[style=mystyle]{Please add one more constraint/requirements into the \{Given Prompt\}}  \\\midrule
\textbf{-- Deppening} & The strategy is: \lstinline[style=mystyle]{If the \{Given Prompt\} contains inquiries about certain issues, the depth and breadth of the inquiry can be increased.}  \\\midrule
\textbf{-- Concretizing} & The strategy is: \lstinline[style=mystyle]{Please replace general concepts with more specific concepts.}  \\\midrule
\textbf{-- Increasing Reasoning} & The strategy is: \lstinline[style=mystyle]{If the \{Given Prompt\} can be solved with just a few simple thinking processes, you can rewrite it to explicitly request multiple-step reasoning.}  \\\midrule
\textbf{In-Breadth Prompt} & \lstinline[style=mystyle]{I want you to act as a Prompt Creator.\\n
Your objective is to take inspiration from the \{Given Prompt\} to create **one** brand new prompt.\\n
**Reqiuirements**:\\n
- This new \{Created Prompt\} should belong to the same domain as the \{Given Prompt\} but with different details.\\n
- The LENGTH and complexity of the \{Created Prompt\} should be similar to that of the \{Given Prompt\}.\\n
- The \{Created Prompt\} must be reasonable and must be understood and responded by humans.\\n
- If the \{Given Prompt\} includes a specific input as part of its instructions, create a new input for your \{Created Prompt\} when applicable.\\n
**Constraints that must be followed**:\\n
- The \{Created Prompt\} should be self-contained, with **all necessary information** provided, so that it can be responded to without needing to refer back to the \{Given Prompt\}.\\n
- Your response should contain **only** the \{Created Prompt\}, **without any** additional formatting or introductory phrases such as 'Here is the created prompt:' or 'The created prompt is:'.\\n
The \{Given Prompt\}: \\n<PROMPT>\\n
========\\nBased on the prompt above, create your prompt:\\n\{Created Prompt\}:\\n}  \\\bottomrule
\end{tabularx}
\end{table}

\begin{table}[t]
\caption{Example from \href{https://huggingface.co/datasets/princeton-nlp/gemma2-ultrafeedback-armorm}{SimPO's preference dataset}, in which the explicit reward margin and implicit reward margin are both large. We rescaled the explicit reward value to a comparable range of implicit rewards.}
\label{example:both_large}
\begin{tabularx}{\textwidth}{@{}p{0.2\linewidth}X@{}}
\toprule
\textbf{Prompt} $\boldsymbol{x}$ & \lstinline[style=mystyle]{Given the task definition and input, reply with output. In this task, you are given Wikipedia articles on a range of topics as passages and a question from the passage. We ask you to answer the question by classifying the answer as 0 (False) or 1 (True)\\n\\nPassage: Tabasco sauce -- Tabasco sauce is a brand of hot sauce made exclusively from tabasco peppers (Capsicum frutescens var. tabasco), vinegar, and salt. It is produced by McIlhenny Company of Avery Island, Louisiana.\\nQuestion: is tabasco sauce and hot sauce the same\\n}  \\\midrule
\textbf{Chosen Response} $\boldsymbol{y_w}$ & \lstinline[style=mystyle]{0 (False) \\n\\nWhile Tabasco sauce *is* a type of hot sauce, the passage states that Tabasco sauce is a brand name.  Therefore, not all hot sauce is Tabasco sauce. \\n} \\\midrule
\textbf{Rejected Response} $\boldsymbol{y_l}$ & \lstinline[style=mystyle]{1 \\n} \\\midrule
\textbf{Explicit Rewards} &  Chosen: $r(x,y_w)=10.3$, rejected: $r(x,y_l) = 3.4$, explicit reward margin: $M_r(x,y_w,y_l) = 6.9$. \\\midrule
\textbf{Implicit Rewards} & Chosen: $\hat r_\theta^\mathrm{Sim}(x,y_w)=-1.4$, rejected: $\hat r_\theta^\mathrm{Sim}(x,y_l)=-7.7$, implicit reward margin: $M_\pi(x,y_w,y_l)=6.3$.\\\bottomrule
\end{tabularx}
\end{table}

\begin{table}[t]
\caption{Example from \href{https://huggingface.co/datasets/princeton-nlp/gemma2-ultrafeedback-armorm}{SimPO's preference dataset}, in which the explicit reward margin and implicit reward margin are both small. We rescaled the explicit reward value to a comparable range of implicit rewards.}
\label{example:both_small}
\begin{tabularx}{\textwidth}{@{}p{0.2\linewidth}X@{}}
\toprule
\textbf{Prompt} $\boldsymbol{x}$ & \lstinline[style=mystyle]{let's play a game. i say a sentence, then you make a sentence that follows up my sentence then i give a continuation to yours and so on. you ready?}  \\\midrule
\textbf{Chosen Response} $\boldsymbol{y_w}$ & \lstinline[style=mystyle]{Sounds fun! I'm ready. Give me your first sentence. \\ud83d\\ude0a  \\n\\n} \\\midrule
\textbf{Rejected Response} $\boldsymbol{y_l}$ & \lstinline[style=mystyle]{I'm ready! Let's do it. \\ud83d\\ude0a  \\n\\n**Give me your first sentence!**  \\n\\n} \\\midrule
\textbf{Explicit Rewards} &  Chosen: $r(x,y_w)=13.7$, rejected: $r(x,y_l) = 13.0$, explicit reward margin: $M_r(x,y_w,y_l) = 0.7$. \\\midrule
\textbf{Implicit Rewards} & Chosen: $\hat r_\theta^\mathrm{Sim}(x,y_w)=-3.7$, rejected: $\hat r_\theta^\mathrm{Sim}(x,y_l)=-2.9$, implicit reward margin: $M_\pi(x,y_w,y_l)=0.8$.\\\bottomrule
\end{tabularx}
\end{table}

\begin{table}[t]
\caption{Example from \href{https://huggingface.co/datasets/princeton-nlp/gemma2-ultrafeedback-armorm}{SimPO's preference dataset}, in which the reward model's judgment is incorrect. In this example, the text provided in the prompt does not explicitly state what Zulu soldiers are called, thus the correct answer to the question \textit{``What are Zulu soldiers called?''} would be \textit{``No Answer''}, which corresponds to the ``rejected'' response by the reward model. Note that the response with a higher implicit reward is correct in this example (we use SimPO's implicit reward in implementation as explained in \S\ref{sec:metric_exp}).}
\label{example:rm_wrong}
\begin{tabularx}{\textwidth}{@{}p{0.2\linewidth}X@{}}
\toprule
\textbf{Prompt} $\boldsymbol{x}$ & \lstinline[style=mystyle]{Please answer the following question: Please answer the following question about this movie plot. If it's un-answerable, please output \\"No answer\\".  Question: What are Zulu soldiers called? Movie plot title: Zulu Dawn Movie plot: The film is set in British South Africa, in the province of Natal, in January 1879. The first act of the film revolves around the administrators and officials of Cape Colony, notably the supremely arrogant Lord Chelmsford and the scheming Sir Henry Bartle Frere, who both wish to crush the neighbouring Zulu Empire, which is perceived as a threat to Cape Colony's emerging industrial economy. Bartle Frere issues an impossible ultimatum to the Zulu king, Cetshwayo, demanding that he dissolve the Zulu Empire. Cetshwayo refuses, providing Cape Colony with a pretext to invade Zululand. Despite objections from leading members of Cape Colony's high society and from Great Britain itself, Bartle Frere authorises Lord Chelmsford to lead a British invasion force into Zululand. The rest of the film focuses on the British invasion of Zululand and the lead-up to the Battle of Isandlwana. The invading British army, laden with an immense network of supply wagons, invades Zululand and marches in the direction of Ulundi, the Zulu capital. British forces, eager to fight a large battle in which they can unleash their cutting-edge military technology against the vast Zulu army, become increasingly frustrated as the main Zulu army refuses to attack the British, and fighting is restricted to a few small skirmishes between British and Zulu scouts. Concerned that their supply lines are becoming overstretched and that the main Zulu army is still at large, British troops begin torturing captive Zulu warriors in an effort to learn the location and tactics of the Zulu army. Halfway to Ulundi, Chelmsford halts his army at the base of Mount Isandhlwana, ignoring the advice of Boer attendants to entrench the camp and laager the supply wagons, leaving the camp dangerously exposed. During the night, Colonel Durnford and an escort of fifty mounted Basutos approach the camp. Lord Chelmsford then orders Durnford to return to his unit, bringing them to the camp immediately to reinforce Colonel Pulleine. Lt. Vereker should join Durnford as...\\nA:}  \\\midrule
\textbf{Chosen Response} $\boldsymbol{y_w}$ & \lstinline[style=mystyle]{Impis \\n} \\\midrule
\textbf{Rejected Response} $\boldsymbol{y_l}$ & \lstinline[style=mystyle]{This plot description does not state what Zulu soldiers are called.  No answer. \\n} \\\midrule
\textbf{Explicit Rewards} &  Chosen: $r(x,y_w)=11.2$, rejected: $r(x,y_l) = 5.0$, explicit reward margin: $M_r(x,y_w,y_l) = 6.2$. \\\midrule
\textbf{Implicit Rewards} & Chosen: $\hat r_\theta^\mathrm{Sim}(x,y_w)=-8.9$, rejected: $\hat r_\theta^\mathrm{Sim}(x,y_l)=-3.4$, implicit margin: $M_\pi(x,y_w,y_l)=5.5$.\\\bottomrule
\end{tabularx}
\end{table}

\end{document}